\documentclass{article}

\usepackage{microtype}
\usepackage{graphicx}
\usepackage{subcaption}
\usepackage{booktabs} 
\usepackage{csquotes} 
\usepackage{hyperref}
\usepackage{url}

 \usepackage[preprint]{icml2026}

\usepackage{amsmath}
\usepackage{amssymb}
\usepackage{mathtools}
\usepackage{amsthm}

\usepackage[capitalize,noabbrev]{cleveref}

\theoremstyle{plain}
\newtheorem{theorem}{Theorem}[section]

\newtheorem{lemma}[theorem]{Lemma}

\theoremstyle{definition}
\newtheorem{definition}[theorem]{Definition}

\theoremstyle{remark}

\usepackage[textsize=tiny]{todonotes}

\icmltitlerunning{Possibilities in Multi-Criteria Benchmarking}

\begin{document}

\twocolumn[
  \icmltitle{Beyond Arrow: From Impossibility to Possibilities \\ in Multi-Criteria Benchmarking}



  \icmlsetsymbol{equal}{*}

  \begin{icmlauthorlist}
    \icmlauthor{Polina Gordienko}{1}
    \icmlauthor{Christoph Jansen}{2}
    \icmlauthor{Julian Rodemann}{1,3}
    \icmlauthor{Georg Schollmeyer}{1}
  \end{icmlauthorlist}

  \icmlaffiliation{1}{Department of Statistics, Ludwig-Maximilians-Universit{\"a}t M\"unchen (LMU Munich)}
  \icmlaffiliation{2}{School of Computing \& Communications, Lancaster University Leipzig, Leipzig, Germany}
  \icmlaffiliation{3}{CISPA Helmholtz Center for Information Security, Saarbrücken, Germany}

  \icmlcorrespondingauthor{Polina Gordienko}{Polina.Gordienko@stat.uni-muenchen.de}

  \icmlkeywords{Machine Learning, ICML}

  \vskip 0.3in
]



\printAffiliationsAndNotice{}  

\begin{abstract}

Modern benchmarks such as HELM MMLU account for multiple metrics like accuracy, robustness and efficiency. When trying to turn these metrics into a single ranking, natural aggregation procedures can become incoherent or unstable to changes in the model set. We formalize this aggregation as a social choice problem where each metric induces a preference ranking over models on each dataset, and a benchmark operator aggregates these votes across metrics. While prior work has focused on Arrow's impossibility result, we argue that the impossibility often originates from pathological examples and identify sufficient conditions under which these disappear, and meaningful multi-criteria benchmarking becomes possible. In particular, we deal with three restrictions on the combinations of rankings and prove that on single-peaked, group-separable and distance-restricted preferences, the benchmark operator allows for the construction of well-behaved rankings of the involved models. Empirically, we investigate several modern benchmark suites like HELM MMLU and verify which structural conditions are fulfilled on which benchmark problems.\footnote{The code for reproducing all experiments is available at \url{https://github.com/polinamgordienko-glitch/Beyond-Arrow-From-Impossibility-to-Possibilities-in-Multi-Criteria-Benchmarking}.} 
  
\end{abstract}

\section{Introduction}

Benchmarks are fundamental to progress in modern machine learning, for they provide standardized frameworks for model evaluation and comparison \citep{hardtrecht2022patterns}. Within the past decades, evaluation methods in the field have expanded from single-task benchmarks such as leaderboards based on the ImageNet dataset to multi-task suites including GLUE \citep{wang2018}, SuperGLUE \citep{wang2019}, MMLU \citep{Hendrycks2021} and BigBench \citep{Srivastava2022BeyondTI}. More recently, holistic evaluation frameworks like HELM \citep{liang2023} have been introduced that combine hundreds of tasks to assess language models across a broad range of scenarios and along different metrics that go beyond accuracy. Yet, as these benchmark suites have grown more complex, a central challenge has emerged: how should we aggregate performance across multiple evaluation criteria, each measured over several datasets, into one meaningful ranking of models?

The prevailing approach in benchmarking is based on the ``unspoken utilitarian principles'' \citep{Rofin_2023}, ranking systems by the arithmetic mean of their scores across task-specific metrics \citep{wang2018, wang2019, Hendrycks2021}. Nonetheless, this aggregation procedure has been increasingly viewed as inadequate, for instance, due to its sole focus on the value of model predictions and disregard for the cost (e.g., model size, training time) of those predictions \citep{ethayarajh20}. Moreover, mean aggregation can be easily dominated by performance on a few outlier tasks \citep{agarwal21}. In general, averaging methods can mislead when samples vary in difficulty, resulting in inflated model performance and unreliable rankings \citep{mishra2021}. The problem is amplified when benchmarks include ordinal criteria such as interpretability, since mean aggregation presupposes a meaningful scale of differences that such metrics do not provide.

Further challenges such as task selection bias \citep{dehghani21}, saturation of benchmarks over time \citep{kiela2021} as well as limitations of dynamic benchmarks \citep{shirali2023theorydynamicbenchmarks} have been identified. However, it has also been argued that a focus on failures merely incentivizes the development of models that sidestep those mistakes by shifting errors elsewhere, creating only the appearance of progress \citep{bowmandahl21}. Instead of focusing on limitations, our work identifies conditions under which meaningful multi-criteria benchmarking becomes possible. Rather than asking \enquote{what goes wrong?}, we ask \enquote{under what structural conditions can benchmarking succeed?}.

Social choice theory -- the ``study of collective decision procedures and mechanisms'' \citep{sep-social-choice} -- offers a precise vocabulary to assess aggregation processes. While Arrow's theorem \yrcite{arrow} establishes that no aggregation function can satisfy all desirable properties simultaneously over the \textit{universal domain} (i.e., the function works for every possible pattern of preferences), the social choice literature has long recognized that \textit{domain restrictions} -- natural constraints on the inputs of the aggregation rule -- can restore the possibility of meaningful aggregation \citep{ black48, inada64, sen, DIETRICH2010512, PUPPE2024426}.  

In the context of benchmarking, this raises a fundamental question that has so far received surprisingly little attention: \textbf{Do multi-criteria benchmarks naturally exhibit structure in the domain that restores possibility?} In contrast to prior work focusing on impossibility \citep{zhang2024inherent}, we deal with sufficient conditions under which meaningful multi-criteria benchmarking becomes possible. We formalize multi-criteria benchmarking as a preference aggregation problem, where metrics such as accuracy, fairness, efficiency act as voters that form preferences on models. Given a fixed benchmark suite, we identify three domain restrictions -- \textit{single-peakedness}, \textit{group separability} and \textit{distance-restrictedness} -- and analyze the properties of benchmarks when the preferences induced by metrics satisfy such structure. Intuitively, single-peakedness means that there exists a one-dimensional ordering of models such that each metric has a single ``sweet spot", and moving away from it along the ordering makes models consistently less preferred by that metric. We then verify this structure empirically on several modern benchmark suites, in particular HELM MMLU.

\begin{figure}[t]
\centering
 \includegraphics[width=0.48\textwidth]{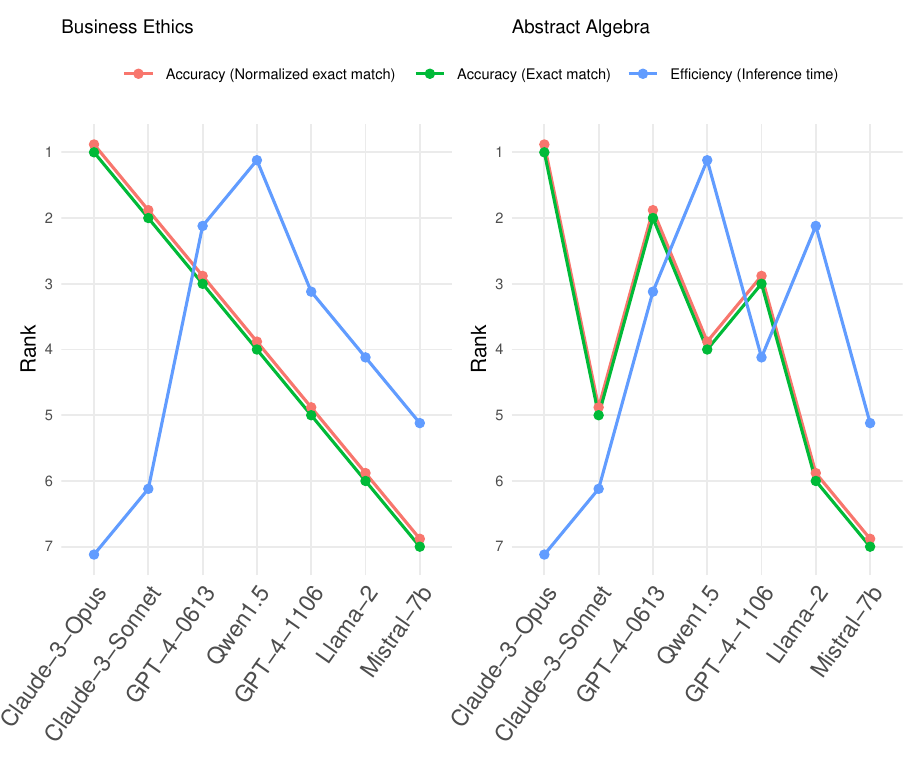} 
\caption{We fix seven language models and a set of accuracy and efficiency metrics from HELM.  For each MMLU subject dataset, each metric induces a ranking of models; we study when these rankings are consistent with a single shared ordering of models (x-axis) so that each metric has one ``sweet spot" (one \textit{peak}) along that ordering. The subject \textit{Business Ethics} (left) satisfies this structure; \textit{Abstract Algebra} (right) does not.}\label{figure_peaks}
\end{figure}

\section{Related Work}\label{lit}

A growing body of literature has turned to social choice theory to formalize the aggregation problem in benchmarks. Early contributions include \citet{ehl2012} who frame benchmarking as consensus over dataset-level preference relations as well as \citet{mptbw2015} who study benchmarking of optimization algorithms as consensus-ranking problem over many test functions. \citet{colombo2022bestsystemsnewperspectives} propose an aggregation method based on Kemeny consensus, while \citet{himmi2023robustnlpevaluationhandling} develop a ranking approach using the Borda count for managing missing scores in benchmarks. \citet{Rofin_2023} introduce VOTE'N'RANK, a framework applying eight social choice procedures onto multi-task benchmarks. The study emphasizes that classical voting rules (e.g., Borda count, Minimax and Condorcet) can give more robust and interpretable rankings than simple mean aggregation. \citet{zhang2024inherent} use Arrow's theorem to analyze the aggregation problem in multi-task benchmarks, highlighting a trade-off between diversity and stability. \citet{lanctot2025evaluatingagentsusingsocial} employ principles from social choice and game theory for building a framework to evaluate general agents. Complementary work \cite{JMLR:v7:demsar06a,JMLRBenavoli,JMLR:v24:22-0902,gsd,longjohn2025statisticaluncertaintyquantificationaggregate} proposes statistically sound methods for aggregating performance across multiple tasks.

\section{Motivation}\label{motiv}
We deal with multi-criteria benchmarking that aims to evaluate and compare models along different criteria/metrics (e.g. accuracy, robustness, efficiency), each measured over multiple datasets. The central challenge is how to obtain a meaningful, overall ranking of models from different metrics. But what does ``meaningful" mean? We argue that two intuitive properties are \textit{coherence} and \textit{stability}.

\textbf{Coherence.} One plausible constraint on meaningful benchmarks is that the overall ranking of models should not contradict itself. For instance, when we consider evaluation criteria across qualitatively different dimensions, aggregation can become inherently controversial. For three models $A,B,C$, it may happen that metrics collectively prefer $A$ to $B$, $B$ to $C$, and yet $C$ to $A$. Even if each metric induces a sound ordering of models, there does not necessarily exist a coherent aggregated ranking over all metrics.

We illustrate this tension on the MMLU benchmark as implemented in HELM v1.0.0, which we refer to throughout the paper as our running example. The benchmark evaluates 23 language models across 57 subject datasets and records multiple metrics per subject \cite{Hendrycks2021, liang2023}. Each metric generates a ranking of models for each subject. A natural democratic aggregation method that treats all metrics equally is pairwise majority comparison. We consider each metric as one \textit{vote} and declare that $A$ beats $B$ if more metrics prefer $A$ to $B$ than prefer $B$ to $A$. Table~\ref{table:cycle} illustrates that these pairwise comparisons may result in cyclic majority rankings such as GPT-4 $\succ$ Qwen1.5 $\succ$ GPT-3.5 $\succ$ GPT-4 (where $\succ$ is short for ``is preferred to''). This paradoxical behavior means that the majority ranking is not well-behaved; even this basic aggregation method fails at delivering a coherent ranking of models across three metrics. Since the pairwise majority relation is cyclic on a subset of models, there is no transitive ranking over the full model set that agrees with all majority comparisons at once. This phenomenon is not rare. Given a small subset of accuracy and efficiency metrics in HELM, we find such cycles in most MMLU subjects (see Appendix \ref{app:expmotiv}).

\begin{table}[t]
\caption{For the MMLU subject \textit{Formal Logic}, consider three metrics and three representative language models. Columns list the three models from best to worst under that metric, with corresponding values in parentheses; for \textit{Accuracy} higher means better, for \textit{Inference Time} and \textit{Output Length} lower means better. Pairwise majority comparison across the metrics yields a cyclic ordering of models: GPT-4 $\succ$ Qwen1.5 $\succ$ GPT-3.5 $\succ$ GPT-4. The exact model identifiers can be found in Appendix \ref{app:expmotiv}.}
\label{table:cycle}
\centering
\small
\begin{tabular}{@{}lll@{}}
\toprule
\textbf{Accuracy} & \textbf{Inference Time (s)} & \textbf{Output Length (bytes)} \\
\midrule
GPT-4 (0.65)        & Qwen1.5 (0.32)               & GPT-3.5 (1.00) \\
Qwen1.5 (0.49)         & GPT-3.5 (0.41)            & GPT-4 (1.17)   \\
GPT-3.5 (0.40)      & GPT-4 (0.49)              & Qwen1.5 (2.00)    \\
\bottomrule
\end{tabular}
\end{table}

\begin{table}[h]
\caption{Aggregated rankings of models across three metrics \textit{Accuracy}, \textit{Inference Time} and \textit{Output Length} for the MMLU subject \textit{High School World History}. We rank models by average rank across the three metrics. The left column shows the overall ranking of 15 models; the right column presents the ranking after Llama-2 has been added to the model set. Several positions shift. Notably, the relative order of Claude-3-Opus and GPT-3.5 flips: Claude-3-Opus $\succ$ GPT-3.5 before adding Llama-2 and GPT-3.5 $\succ$ Claude-3-Opus after adding Llama-2.}
\label{table:instability}
\centering
\small
\begin{tabular}{@{}ll@{}}
\toprule\textbf{Before} & \textbf{After} \\
\midrule
1.\ GPT-4-0613 & 1.\ GPT-4-0613 \\
2.\ GPT-4-1106 & 2.\ GPT-4-1106 \\
3.\ Claude-3-Opus & 3.\ Qwen1.5 \\
4.\ Qwen1.5 & 4.\ GPT-3.5 \\
5.\ GPT-3.5 & 5.\ Claude-3-Opus \\
\vdots & \vdots \\
15.\ Google-Text-Bison & 15.\ Google-Text-Bison \\
 & 16.\ Llama-2 \\
\bottomrule
\end{tabular}
\end{table}

\textbf{Stability.} We can avoid cycles in benchmark rankings by using another aggregation method, for instance, any rule based on combining ranks of models across different datasets/tasks or metrics. One example is the winning rate of models, defined as the fraction of head-to-head comparisons across datasets where a model is better on that metric, which is widely used in benchmarks \cite{liang2023, Lee2023HolisticEO, zhang2024inherent}. Another popular example is Borda score \cite{borda1781}, which is being increasingly used in benchmarks such as MTEB \cite{Chung2025MaintainingMT}. Then, however, we face another problem: benchmarks may become unstable to irrelevant changes in the model set. Intuitively, stability means that adding or removing a model $C$ should not flip the ranking between two existing models $A$ and $B$. Table~\ref{table:instability} demonstrates the violation of stability on one dataset in the HELM MMLU benchmark. We consider 15 models with the highest accuracy on this subject among those with complete measurements for the three metrics, and then add one additional model. We compute the overall rankings of models before and after the change across metrics by averaging the metrics' ranks of models (Borda score over ranks). Note that the new model, Llama-2, is worse on \textit{Accuracy} compared to all existing 15 models, yet relatively efficient with an  \textit{Inference Time} faster than Claude-3-Opus but slower than GPT-3.5. Crucially, the addition of Llama-2 changes the rankings of models near the top of the leaderboard. We find such flips in 44 out of 57 MMLU subjects (see Appendix \ref{app:expmotiv}).

By no means do these failures imply the impossibility of meaningful multi-criteria benchmarking. Rather, these examples tell us what must be true of the rankings induced by each metric if coherent and stable benchmarking is to be possible. For the aforementioned pathologies arise only in the \textit{unstructured} combinations of rankings. In this paper, we show meaningful aggregation can be achieved. We formalize each metric's ranking as a vote and analyze the structure in those votes across metrics and datasets that enables fulfillment of coherence, stability and further desirable properties from social choice theory that we explore in the next section.

\section{Benchmarking as a Social Choice Problem}\label{benchmarking}
Social choice theory studies the aggregation of individual inputs (e.g. preferences, judgements, probabilities) into a collective output. In particular, it investigates axiomatic properties that such collective decision procedures can or cannot satisfy. In this section, we formalize multi-criteria benchmarking as a preference aggregation problem.
\subsection{Notation}\label{setup}
\textbf{Multi-criteria benchmarking.} Let $\mathcal{D}$ be some fixed universe of instances/datasets and let $\mathcal{A}$ be some fixed and finite set of algorithms/models with $k:=|\mathcal{A}|\in \mathbb{N}$. Let $(\phi_i)_{i \in [n]}$ denote a family of metrics/criteria $$\phi_i: \mathcal{A}\times \mathcal{D} \to [0,1],$$ for some $n \in \mathbb{N}$, where $[n]:=\{1, \dots,n\}$. Moreover, let $\Phi=(\phi_1 , \dots , \phi_n): \mathcal{A}\times \mathcal{D} \to [0,1]^n$ be a multivariate metric.\footnote{Note that our framework is more general than a mere multi-criteria benchmarking problem; it is generic, in the sense, that any problem that produces evaluations over a finite set of alternatives can be cast in terms of $\mathcal{D}$, $\mathcal{A}$ and $\Phi$. For instance, one may treat datasets or tasks as voters (model comparison across datasets; \citet{zhang2024inherent}) or consider test functions as voters in optimizer benchmarking \cite{rodemann2024partialrankingsoptimizers}.} The goal of multi-criteria benchmarking can be stated as follows:  We search for a model from $\mathcal{A}$ that best balances the metrics $\phi_1 , \dots , \phi_n$ across the various datasets $D \in \mathcal{D}$. In order to identify the best algorithm, the entire information contained in $\mathcal{A}$, $\mathcal{D}$ and $\Phi$ should be exploited.

\textbf{Preference aggregation problem.} Let pref$(\mathcal{A})$ be the set of all complete and transitive \textit{preference relations} on $\mathcal{A}$. Each fixed metric $ i \in [n]$ and dataset $D \in \mathcal{D}$ induce a preference relation $R_{D,i} \in$ pref$(\mathcal{A})$, defined by setting $$A \succeq_{R_{D,i}} A^{'} :\Leftrightarrow \phi_i(A,D) \geq \phi_i(A^{'},D).\footnote{In general, we assume that larger values of metric $\phi$ mean better performance on that metric. Otherwise, e.g, for metrics \textit{Inference Time} and \textit{Output Length}, we flip the sign of the values.}$$ Hence, $R_{D,i}$ corresponds to the ranking of models in $\mathcal{A}$ induced by the scores of the $i$th performance metric, evaluated on instance $D$. We denote by $P_{D,i}$ the strict part of $R_{D,i}$ defined by $A \succ_{P_{D,i}} A^{'} :\Leftrightarrow \phi_i(A,D) > \phi_i(A^{'},D)$. Since metric scores can tie, $R_{D,i}$ may include indifference between models; we use $P_{D,i}$ whenever strict comparisons are necessary. We obtain $P_{D,i}$ from $R_{D,i}$ by breaking ties according to a fixed deterministic rule.
        
Furthermore, let prof$(\mathcal{A},\mathcal{D},\Phi)$ denote the set of \textit{preference profiles} induced by $\mathcal{D}$, i.e., the set $$\text{prof}(\mathcal{A},\mathcal{D},\Phi)=\{R_D:=(R_{D,1}, \dots , R_{D,n})| D \in \mathcal{D}\}.$$ Each profile $R_D$ arises by collecting the preference rankings induced by metrics $\phi_1 , \dots ,\phi_n$ evaluated on instance $D$ in an $n$-tuple. Let $B:\text{prof}(\mathcal{A},\mathcal{D},\Phi)\to$ pref$(\mathcal{A})$ be a \textit{benchmark operator} -- a map returning a preference order among the algorithms of interest for every profile of such rankings that is inserted to it. Importantly, the domain of the operator $B$ depends on each of the objects $\mathcal{A},\mathcal{D},\Phi$.

In HELM MMLU, $\mathcal{D}$ corresponds to the set of 57 MMLU subject datasets, while each $D \in \mathcal{D}$ is a single subject (e.g. \textit{Formal Logic}). The set of evaluated language models is denoted by $\mathcal{A}$, while $\Phi$ is the collection of all reported HELM metrics, with each $\phi_i$ corresponding to the metric $i$. The relation $R_{\textit{Formal Logic, Inference Time}}$ denotes the ranking of models according to metric \textit{Inference Time} on subject \textit{Formal Logic}, while the profile $R_{\textit{Formal Logic}}$ includes the rankings induced by all metrics on this subject. The benchmark's overall ranking of models on that subject is given by $B(R_{\textit{Formal Logic}})$. 

There are many kinds of benchmark operators, including pairwise majority and average rank mentioned in Section~\ref{motiv}. In general, pairwise majority aggregation is the most reasonable local aggregation rule, though its downside -- the possibility of cyclic majority rankings -- is well-known. Our contribution is to show that there are many realistic benchmarking scenarios where pairwise majority not only produces coherent (acyclic) overall ranking of models but also satisfies further desirable properties that we introduce in the next section. For this reason, the remainder of the paper studies the benchmark operator $B_M$ that aggregates metric rankings using the pairwise majority.\footnote{In benchmarking, other aggregation rules, in particular those based on combining ranks of models across tasks (e.g. Borda count) are common. However, these procedures face different problems, since they are unstable to changes in model set (e.g., see Table~\ref{table:instability}) and violate Arrow's axiom of independence of irrelevant alternatives \cite{JMLR:v17:benavoli16a}.} For any $A,A^{'} \in \mathcal{A}$ and any $D \in \mathcal{D}$:
\begin{align*}
M_D(A,A^{'}) := |\{i \in [n] : A \succeq_{R_{D,i}} A^{'}\}|, \\
A \succeq_M^D A^{'} \Leftrightarrow M_D(A,A^{'}) \ge M_D(A^{'},A). 
\end{align*}

\subsection{Social Choice: From Impossibility to Possibility}
Social choice is well-known for its impossibility results, most prominently Arrow's theorem \yrcite{arrow}. However, the underlying problem behind impossibilities in voting was recognized much earlier, by Condorcet \yrcite{condorcet1785}. He formalized a situation where three voters rank the alternatives $x$, $y$, and $z$ from the most-preferred to the least-preferred and aggregate their preferences by simple majority voting on each pair of alternatives. Even though each voter's preference is rational, majority prefers $x$ over $y$, $y$ over $z$, and $z$ over $x$, yielding a \textit{Condorcet cycle} and an irrational collective preference ranking. The cyclic behavior of the pairwise majority relation in Table~\ref{table:cycle} corresponds precisely to the Condorcet paradox.

\textbf{Arrow's theorem.}
Arrow's impossibility theorem \yrcite{arrow} generalized Condorcet's paradox by showing that, given a set of individual preferences over three or more alternatives, there exists no aggregation procedure that satisfies a
few quite reasonable assumptions concerning the autonomy of voters and the rationality of their preferences \cite{sep-arrows-theorem}. The conditions Arrow imposes on aggregation rules, often called \textit{axioms}, include \textit{Non-Dictatorship}, \textit{Independence of Irrelevant Alternatives}, \textit{Weak Pareto}, \textit{Social Ordering} and \textit{Universal Domain}.

\begin{theorem} \cite{arrow}
    Suppose $k\geq 3$ and $n\geq 2$. Let $F$ be an operator that maps each profile from $($pref$(\mathcal{A}))^n$ (\textbf{Universal Domain}) to a preference relation in pref$(\mathcal{A})$ (\textbf{Social Ordering}). For a profile $R=(R_{1}, \dots , R_{n}) \in ($pref$(\mathcal{A}))^n$, let $P_{i}$ be the strict part of $R_{i}$ and $P$ the strict part of $F(R)$. Then $F$ cannot satisfy the following conditions simultaneously:
    \begin{itemize}
        \item \textbf{Non-Dictatorship:}  There is no $i \in [n]$ such that for all profiles $R$ and for all $A, A^{'} \in \mathcal{A}$, if $A \succ_{P_{i}} A^{'}$, then $A \succ_{P} A^{'}$.
        \item \textbf{Weak Pareto:} For all profiles $R$ and for all $A, A^{'} \in \mathcal{A}$, if $ \ \forall i \in [n] \ \ A \succ_{P_{i}} A^{'}$, then $A \succ_{P} A^{'}$.
        \item \textbf{Independence of Irrelevant Alternatives (IIA):} For all $R$, $R^{'}\in($pref$(\mathcal{A}))^n$ and for all $A, A^{'} \in \mathcal{A}$, if $ \ \forall i \in [n] \ \ A \succeq_{P_{i}} A^{'}  \Leftrightarrow A \succeq_{P^{'}_{i}} A^{'}$, then $A \succeq_{P} A^{'}  \Leftrightarrow A \succeq_{P^{'}} A^{'}$.
    \end{itemize}
\end{theorem}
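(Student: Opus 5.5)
We argue by contradiction: assuming $F$ satisfies Weak Pareto and IIA on the universal domain with $k\ge 3$, we exhibit a dictator. We use the decisive-coalition route. Call a coalition $G\subseteq[n]$ \emph{decisive over} an ordered pair $(A,A')$ if, whenever every $i\in G$ has $A\succ_{P_i}A'$ and every $i\notin G$ has $A'\succ_{P_i}A$, the social order has $A\succ_P A'$. Call $G$ \emph{decisive} if it is decisive over every ordered pair of distinct models. By IIA, whether $A\succ_P A'$ holds depends only on the individual rankings of $A$ versus $A'$, so decisiveness is well defined. By Weak Pareto, $[n]$ is decisive, so the family of decisive coalitions is nonempty.

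\textbf{Step 1 (field expansion).} If $G$ is decisive over one pair $(A,A')$, then it is decisive over every pair. Take a third model $C$ (here $k\ge3$ is needed) and consider profiles where members of $G$ rank $A\succ A'\succ C$ and the others rank $A'\succ C\succ A$. This is allowed because the domain is universal. Decisiveness over $(A,A')$ gives $A\succ_P A'$. Weak Pareto gives $A'\succ_P C$. Transitivity of $F(R)$ then yields $A\succ_P C$. The non-members rank $C$ above $A$, and the members' placement of $A'$ is irrelevant by IIA, so $G$ is decisive over $(A,C)$. Symmetric constructions, placing $C$ before $A$ or relabelling the roles, transfer decisiveness to $(C,A')$ and $(A',A)$. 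Chaining these moves reaches every ordered pair. This step requires care to make the IIA invocations exact, because IIA in the statement is phrased in terms of the individual comparisons of the two models only.

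\textbf{Step 2 (contraction).} If a decisive $G$ has $|G|\ge2$, split it into nonempty $G_1,G_2$. Take three models $x,y,z$ and the following profile: $G_1$ ranks $x\succ y\succ z$, $G_2$ ranks $z\succ x\succ y$, and the rest rank $y\succ z\succ x$. Since $G$ prefers $x$ to $y$ and everyone else prefers $y$ to $x$, decisiveness gives $x\succ_P y$. Now split on the social comparison of $x$ and $z$.
\begin{itemize}
\item If $x\succ_P z$, then $G_1$ alone prefers $x$ to $z$ and all others prefer $z$ to $x$. So $G_1$ is decisive over $(x,z)$, hence decisive by Step 1.
\item Otherwise $z\succeq x$ socially, and transitivity with $x\succ_P y$ gives $z\succ_P y$. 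Here $G_2$ alone prefers $z$ to $y$ and all others prefer $y$ to $z$, so $G_2$ is decisive.
\end{itemize}
Either way a strictly smaller decisive coalition exists.

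\textbf{Step 3 (conclusion).} Iterate Step 2 starting from $[n]$. Since $n$ is finite, this terminates at a decisive singleton $\{i\}$. It remains to show that $i$ is a dictator in the sense of the statement, that is, $A\succ_{P_i}A'$ forces $A\succ_P A'$ whatever the others do, not only when they all oppose $i$. Insert a third model $C$ so that $i$ ranks $A\succ C\succ A'$, every other voter ranks $C$ above both $A$ and $A'$, and the others' relative ranking of $A$ and $A'$ is arbitrary. Decisiveness of $\{i\}$ over $(A,C)$ gives $A\succ_P C$. Weak Pareto gives $C\succ_P A'$. Transitivity gives $A\succ_P A'$, and IIA extends this to every profile with the same $A$-versus-$A'$ comparisons. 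This contradicts Non-Dictatorship.

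The main obstacle is Step 1, namely bookkeeping the profile constructions so that IIA applies, and making sure that ties (complete preorders) in the other voters' rankings do not break the argument. For this we work with strict individual preferences wherever the definitions allow it.
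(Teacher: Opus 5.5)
The paper gives no proof of this statement. It is imported as a classical result with the citation to Arrow and serves only as background for the domain-restriction theorems, so there is nothing in the paper to match your route against.

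Your argument is the standard decisive-coalition proof (Sen's field-expansion and group-contraction lemmas), and it is correct for the paper's formulation:
\begin{itemize}
\item Weak Pareto makes $[n]$ decisive.
\item Your two moves $(x,y)\to(x,z)$ and $(x,y)\to(z,y)$ for $z\notin\{x,y\}$ are verified by the profiles you write down.
\item The case split in Step 2 on $x\succ_P z$ versus $z\succeq_P x$ correctly uses completeness of the social weak order.
\item Step 3 correctly upgrades the almost-decisive singleton $\{i\}$ to a dictator over arbitrary profiles, including profiles where other voters are indifferent between $A$ and $A'$. Placing $C$ strictly above both keeps each modified $R'_j$ a complete preorder, and this is the only place ties matter.
\end{itemize}

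Compared with the bare citation, your proof checks that the paper's particular variant is genuinely covered: weak orders as inputs and outputs, Weak Pareto with strict unanimity, and IIA phrased through pairwise comparisons. It also shows where the hypotheses enter. $k\ge 3$ is used only to supply a third model in Steps 1--3, while $n\ge 2$ plays no role: for $n=1$, Weak Pareto alone already makes voter $1$ a dictator.

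Two small points to tighten:
\begin{itemize}
\item Spell out the chaining in Step 1. The reversal is $(A,A')\to(A,C)\to(A',C)\to(A',A)$, and every other ordered pair is reached from $(A,A')$ in at most two moves.
\item The statement's IIA is written one-directionally. Invoke it for both $(A,A')$ and $(A',A)$, so that the \emph{strict} social comparison is transported between profiles.
\end{itemize}
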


Note that the operator $F$ generally differs from the operator $B$ as defined earlier, since in most cases the benchmark suite $B$ does not induce the full domain of preference profiles $($pref$(\mathcal{A}))^n$. This is why Arrow's impossibility theorem cannot be explicitly applied to the benchmark operator $B$.

\textit{Social Ordering} requires the benchmark to output a coherent ranking, i.e. a complete and transitive binary relation on $\mathcal{A}$. Some natural aggregation rules, in particular pairwise majority, can fail this requirement: when a Condorcet cycle is present, the collective relation is intransitive. \textit{Universal Domain} demands that the aggregation rule admits any logically possible combination of metric rankings over $\mathcal{A}$. This implies that we are not allowed to make any assumptions about metrics' behavior or relationships between metrics in advance. In benchmarking, however, \textit{Universal Domain} is not a reasonable assumption. For metrics do exhibit systematic behavior, and trade-offs across families of metrics are well-known (e.g., \citet{kaplan2020,Ang2022CharacterizingTE}). \textit{Weak Pareto} asks that if every metric ranks model $A$ over model $A^{'}$, then the collective ranking across metrics must prefer $A$ to $A^{'}$. \textit{Non-Dictatorship} rules out any single metric that always determines the overall preferential ranking across all metrics. Finally, the theorem's most demanding condition of \textit{IIA} states that the social comparison of models $A$ and $A^{'}$ depends only on how each metric compares the two models and not on how any metric ranks other models.

\textbf{Restricted preference domains.} 
A longstanding literature in social choice theory \cite{black48,sen, inada69, DIETRICH2010512, elkind25} investigates \textit{possibilities}, under which majority cycles can be avoided. The idea is to relax or violate one of Arrow's axioms in a natural, plausible manner under which the remaining conditions can become jointly satisfiable. One notable ``escape route" from impossibility is to violate \textit{Universal Domain}: we restrict attention to preference profiles with certain structure rather than allowing all logically possible profiles of complete and transitive preferences. It has been established that if a voter's preferences fall into a suitably restricted domain, the possibility of meaningful aggregation can be restored. The well-behaved restricted domains are called \textit{Condorcet domains} and prominent examples include domains of single-peaked, group-separable and value-restricted preferences \cite{PUPPE2024426}. Sen's theorem \yrcite{sen} provides a general sufficient condition under which the pairwise majority relation yields a transitive ranking. While Sen's condition is very general, it is comparatively technical \cite{Elsholtz2005} and not particularly interpretable in the benchmarking context. Therefore, we focus on restricted domains with clearer meaning and investigate domains of single-peaked, group-separable and distance-restricted preferences in Section~\ref{analysis}.

\subsection{Aggregation Across Datasets}\label{section_aggregation_across_datasets}
Our formalization of benchmarking tasks using operators is fundamentally based on a social choice perspective: The operator assigns exactly one aggregate (or consensus) relation on the models to each preference profile generated by the various metrics on a specific dataset. In particular, the relations obtained generally \textit{vary depending on the dataset}. However, in many applied benchmark studies, this dependence on the instance under consideration is undesirable. Instead, the benchmark suite $\mathcal{D}$ should be used to extract a \textit{ranking of the models per se}, which carefully weighs the information contained in the suite.

So how can we go from a benchmark operator to a ranking of the models over all datasets? We consider the family of orders generated by a benchmark operator $B$, given by $\text{ord}(B,\mathcal{D}):= \bigl(B(R_D)\bigr)_{D \in \mathcal{D}},$ which is an ordered list of all rankings of the models in $\mathcal{A}$ that occur across the different datasets in $\mathcal{D}$, if performance is operationalized by the multi-dimensional metric $\Phi$. If we now want to choose \textit{one} ranking from this list, we should, if possible, choose one that best represents the family. But how can we determine a representative element, such as an abstraction of the classical median, from a list of relations? Generally, there are many possibilities, here. Under a statistical perspective one can treat every dataset $D$ and its associated ranking $B(R_D)$ as a data point of some population (i.e., of $\mathcal{D}$) of interest. Under this understanding, a modern and  natural approach is based on the theory of \textit{depth functions} \citep{zuo} and its generalizations to non-standard data spaces, such as relations in our case.

For choosing the most central dataset/ranking from a suite, there are different depth functions available, for example the ufg depth \cite{BLOCHER2024109166,blocher2024union} or a generalization of Tukey depth \yrcite{tukey1975mathematics}, firstly used in \citet{JANSEN201849} under the name \textit{commonality sharing rule} and analyzed in \citet{BLOCHER2025105372}. In this paper, we use the generalized Tukey depth where the objects of interest are rankings. The idea is 
to choose from the population $\text{ord}(B,\mathcal{D})$ of rankings the so-called \textit{commonality sharing ranking(s)} -- the ranking(s) that share(s) with every subpopulation of minimum-size $k$ the commonalities of the subpopulation, where $k$ is chosen as small as possible. Here, commonalities are all pairs $(A,A^{'})$ which are ranked identically by all members of the subpopulation. 
 
It is important to note that this only works for benchmark operators that produce exclusively acyclic relations as aggregates. Otherwise, the commonality sharing rule is not well defined\footnote{The generality of the commonality sharing rule (and other methods like the ufg depth) would in principle allow to adopt the aggregation ideas to acyclic relations. However, the aggregated result would then of course be generally only a cyclic relation and this is of course not satisfying.}. Nonetheless, if it is ensured that the operator generates only preference relations (e.g., via suitable domain restrictions), the most central relation in the ordered list $\text{ord}(B,\mathcal{D})$ according to the generalized Tukey depth can be selected as the aggregate. In this sense, our enquiry of finding meaningful benchmark operators also plays a central role in the systematic identification of a suitable ranking of the models in $\mathcal{A}$ per se.

\section{Restricted Preference Domains in Benchmarking}\label{analysis}
\subsection{Single-Peaked Preferences}
In this section, we investigate domain restrictions in the context of multi-criteria benchmarking. Given fixed objects $\mathcal{A}$, $\mathcal{D}$ and $\Phi$, we examine the structure in the set of preference profiles $\text{prof}(\mathcal{A},\mathcal{D},\Phi)$. We consider three types of domain restrictions: single-peakedness, group separability and distance-restrictedness. Conditional on benchmark suites with this specific structure, we show that the operator $B_M$ satisfies Arrow's axioms of \textit{Non-Dictatorship}, \textit{IIA}, \textit{Weak Pareto} and \textit{Social Ordering}. We pair each domain restriction with an empirical check, testing our assumptions on the structure of the fixed benchmark suite in Section~\ref{experiments}.

The first and most famous restricted preference domain was established by \citet{black48} and independently discovered by \citet{arrow51}. The key idea is that alternatives can be embedded on a one-dimensional axis such that each voter has a single most-preferred point (a \textit{peak}) on that axis, and the voter's preferences decline when moving away from that point on the axis. On such domain of single-peaked preferences, Condorcet cycles are impossible.

\textbf{Interpretation in benchmarking.} Single-peakedness means that there exists a way to arrange models along some spectrum so that each metric's ranking of models has a single peak on this spectrum. This can be viewed as the ``sweet spot" of each metric where the most preferred models for this metric lie, while performance decreases when moving further away from that region on the spectrum. In benchmark practice, this one-dimensional axis can be observed in several contexts, reflecting different trade-offs of model design. For instance, imagine ordering algorithms by the number of parameters, and then examining different metrics' rankings along this ordering. We would expect that interpretability would be highest for simpler models and decrease as the number of parameters goes up. Accuracy would be low for very simple models and peak for models with a moderate number of parameters, and then possibly decrease due to overfitting. Efficiency would prefer very simple models and drop as we move to models with larger number of parameters.

Our definition of single-peaked preference profiles follows \citet[Sec.~3]{ballester11}.  Assume $S \subseteq \mathcal{A}$ (with $|S| \geq 3$). Let $\mathcal{L}$ be the set of all strict linear orders over $S$ and set $L \in \mathcal{L}$. For any two models $A,A^{'} \in \mathcal{A}$, we write $A\succ_L A^{'}$ if $A$ precedes $A^{'}$ in the ordering $L$. 

\begin{definition}
For any set  $S \subseteq \mathcal{A}$ let $w(S,P_{D,i})$ denote the \textbf{least preferred model} in $S$ according to the strict preference relation $P_{D,i}$ and let $b(S,P_{D,i})$ be the \textbf{most preferred model} in $S$ according to $P_{D,i}$. 
Let $L$ be an \textbf{admissible orientation of $S$ with respect to $P_{D,i}$} if for any three models $A,A^{'},A^{''} \in S$ with $A^{'}=b(S,P_{D,i})$ being the most preferred model in $S$ according to metric $i$ such that $A^{'}\succ_LA \succ_LA^{''}$ or $A^{''}\succ_LA \succ_LA^{'}$, we have $\forall A, A^{''} \in S$,  $A\succ_{P_{D,i}} A^{''}$. Denote the set of all admissible orientations of $S\subseteq \mathcal{A}$ with respect to $P_{D,i}$ by $\mathcal{L}_S (P_{D,i})$ and set $\mathcal{L}_S(R_{D})= \bigcap_{i \in [n]} \mathcal{L}_S (P_{D,i})$. 
A profile $R_D$ is called \textbf{single-peaked} if there exists an ordering $L$ over the set of algorithms $\mathcal{A}$ with $L \in \mathcal{L}_\mathcal{A}(R_{D})$ such that $\mathcal{L}_\mathcal{A}(R_{D})\neq \varnothing$.
\end{definition}

\begin{theorem}
Fix a dataset $D\in\mathcal D$ and a finite set of algorithms $\mathcal{A}$. Let $R_D$ be a single-peaked profile induced by $D \in \mathcal{D}$. Then the weak pairwise majority relation $\succeq_M^D$ satisfies Non-Dictatorship, Weak Pareto, IIA and yields a complete and transitive order on $\mathcal{A}$.\footnote{The proof of this and the following theorems can be found in Appendix \ref{appsp}-\ref{appds}.}
\end{theorem}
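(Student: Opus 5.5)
The plan is to verify the four properties separately. IIA, Weak Pareto and completeness follow directly from the definition of $B_M$ and do not use single-peakedness. Only transitivity needs the domain restriction, and there I will adapt Black's median-voter argument to the ordering $L\in\mathcal{L}_\mathcal{A}(R_D)$.

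\emph{Completeness and IIA.} For any $A,A'$, one of $M_D(A,A')\ge M_D(A',A)$ or the reverse holds, so $\succeq_M^D$ is complete. By definition, $M_D(A,A')$ depends only on how each $R_{D,i}$ compares $A$ and $A'$. Hence two profiles that agree metric-wise on the pair $\{A,A'\}$ give the same majority comparison, which is IIA.

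\emph{Weak Pareto.} If $A\succ_{P_{D,i}}A'$ for all $i$, then, assuming $P_{D,i}$ reflects $R_{D,i}$ on non-tied pairs, $M_D(A,A')=n$ and $M_D(A',A)=0$. So $A$ is strictly majority-preferred.

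\emph{Non-Dictatorship.} With $n\ge 3$, pick a metric $i$ and a single-peaked profile in which $i$ ranks $A\succ A'$ but all others rank $A'\succ A$. For example, take an axis on which $A'$ is the common peak of the other metrics and $A$ is $i$'s peak. Majority then overrules $i$. I will note that for $n=2$ ties make the weak relation non-dictatorial as well, since opposing strict preferences yield indifference.

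\emph{Transitivity.} This is the main step. I will work with strict preferences, assuming ties have been broken as in the definition. First consider odd $n$. Let $L$ be the common admissible orientation. Admissibility means each metric's preference decreases strictly on both sides of its peak $b(\mathcal{A},P_{D,i})$. Restricted to any triple $S=\{x,y,z\}$, the profile remains single-peaked with respect to $L|_S$; say $x\succ_L y\succ_L z$. Then no metric ranks $y$ last among the three, since $y$ lies between the others and preferences decline away from the peak. I will show that this \emph{never-worst} condition on $y$ (a Sen value-restriction) excludes both Condorcet cycles. The cycle $x\succ y\succ z\succ x$ would require majorities for $x\succ y$, $y\succ z$ and $z\succ x$. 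Voters preferring $x$ to $y$ must rank $x\succ y\succ z$, because $y$ is not last. Those preferring $z$ to $x$ rank $z\succ y$ or place $z$ first. Counting shows the majorities for $x\succ y$ and $z\succ x$ overlap in a voter ranking $x\succ y$ and $z\succ x$, which forces $y$ last, a contradiction. The reverse cycle is symmetric.

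Alternatively, I would follow Black's classical argument directly. On $L$, the median peak beats everything and the majority relation agrees with the order of distances from it. I expect to cite \citet{black48} here and only verify that the paper's admissibility definition matches the usual notion.

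The main obstacle is the weak relation $\succeq_M^D$ with even $n$ or ties. Majority indifference can then be intransitive; for instance, with two voters $x\sim y$, $y\sim z$ but $x\succ z$ is possible. I would handle this by invoking Sen's theorem on value-restricted profiles, assuming odd $n$ or the strict tie-broken $P_{D,i}$. If the statement genuinely needs an arbitrary $n$, I would state this parity assumption explicitly, since Black's theorem for weak majority needs it.
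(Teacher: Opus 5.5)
Your handling of completeness, IIA, Weak Pareto and Non-Dictatorship matches the paper's appendix essentially step by step. The paper's non-dictatorship witness is the same as yours: an axis $A \succ_L A' \succ_L A''$ with metric $i$ peaked at $A$ and every other metric peaked at $A''$.

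The difference is in transitivity. The paper's Lemma~\ref{lemsp1} just cites Black for profiles single-peaked on a common axis, with no hypothesis on $n$. You instead derive transitivity from two facts: the middle element of each triple is never ranked last, and any two strict majorities overlap. For odd $n$ with strict orders this argument is complete and self-contained. It also exposes a parity requirement that the paper's citation hides.

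Your parity caveat is not excess caution: for even $n$ the statement, as written for the weak relation $\succeq_M^D$, is false. Your two-voter pattern can be realized. Take the axis $y, x, z$ and the two metrics $x \succ z \succ y$ and $y \succ x \succ z$. Both are single-peaked on this axis, and majority gives $z \succeq_M^D y$ and $y \succeq_M^D x$ (both ties) but $x \succ_M^D z$. So the paper's proof has a gap at exactly the point you flagged. Your repair, adding the odd-$n$ hypothesis as the paper itself does for group separability, is the right one.

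For even $n$, the correct conclusion is only that the strict part $\succ_M^D$ is transitive. A variant of your never-worst counting argument gives this: it also rules out the configuration $a \succ_M^D b$, $b \succ_M^D c$, $a \sim_M^D c$.

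One small correction to your alternative sketch. On single-peaked profiles the majority relation is itself single-peaked on $L$, with peak at the median peak. It does not in general follow an ``order of distances'' from that peak: comparisons between alternatives on opposite sides of the median depend on the profile.
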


\subsection{Group-Separable Preferences}
\citet{inada64, inada69} has introduced the class of group-separable domains which are sufficient for a transitive social ranking as well as the fulfillment of all Arrow's axioms except for \textit{Universal Domain}, provided that the number of voters is odd. In social choice theory, group separability property means that every subset of alternatives can be partitioned into two blocks such that all voters rank every element of one block above every element of the other.

\textbf{Interpretation in benchmarking.} In a group-separable profile, for any subset of models, there exists at least one group of models that every metric regards as entirely above (or entirely below) the rest. This is very plausible in cases where the metrics measure the same latent construct, for instance, when using several accuracy metrics. Even if the metrics are not identical, they typically agree that some models are better than the others. But the condition can also hold in multi-objective comparisons, such as combining both accuracy and efficiency metrics in HELM MMLU. This is because this benchmark contains models that are consistently dominated on the chosen metric set, e.g. models that tend to be less accurate and less efficient than several others such as Mistral-7b. Whenever such dominated models are present, the metrics agree on a \textit{separation} that places them entirely below the rest (and similarly, if a subset contains models that are both accurate and efficient, they can form a block that all metrics place above the rest). We define group-separability following \citet[Sec.~4]{ballester11}.

\begin{definition}
Assume $S \subseteq \mathcal{A}$ (with $|S| \geq 3$). Let a non-empty set $E \subset S$ be a \textbf{separation with respect to $P_{D,i}$} if either $\forall A \in E, \forall A^{'} \in S \setminus E: A \succ_{P_{D,i}} A^{'}$ or $\forall A \in E, \forall A^{'} \in S \setminus E: A^{'} \succ_{P_{D,i}} A$. Let $\mathcal{S}_{P_{D,i}}$ be the set of all separations of $S$ with respect to $P_{D,i}$ and $\mathcal{S}_{R_{D}} = \bigcap_{i \in [n]} \mathcal{S}_{P_{D,i}}$. A profile $R_D$ is called \textbf{group-separable} if for any $S \subseteq \mathcal{A}$, $\mathcal{S}_{R_{D}} \neq \varnothing$. 
\end{definition}

\begin{theorem}
Fix a dataset $D\in\mathcal D$ and a finite set of algorithms $\mathcal{A}$. Let $R_D$ be a group-separable profile induced by $D \in \mathcal{D}$. Assume $n$ is odd (with $n \geq 3$). Then the strict pairwise majority relation $\succ_M^D$ satisfies Non-Dictatorship, Weak Pareto, IIA and yields a complete and transitive order on $\mathcal{A}$.
\end{theorem}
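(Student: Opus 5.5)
The plan is to check the four properties one at a time. Non-Dictatorship, Weak Pareto and IIA hold for strict pairwise majority on any domain. The real content is transitivity, which I will prove directly from group separability; completeness comes from oddness of $n$. Throughout I work with the strict (tie-broken) relations $P_{D,i}$, and set $A \succ_M^D A'$ iff $|\{i: A\succ_{P_{D,i}}A'\}| > n/2$.

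\textbf{Completeness (Social Ordering, part one).} Each $P_{D,i}$ strictly ranks $A$ against $A'$ for $A\neq A'$. Since $n$ is odd, exactly one of $A\succ_M^D A'$ and $A'\succ_M^D A$ holds. So the majority relation is a tournament on $\mathcal{A}$.

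\textbf{The three axioms.}
\begin{itemize}
\item \emph{Weak Pareto.} If all $n$ metrics prefer $A$ to $A'$, then $A$ gets $n>n/2$ votes.
\item \emph{IIA.} The majority comparison of $A$ and $A'$ is a function only of the individual comparisons of this pair.
\item \emph{Non-Dictatorship.} I will exhibit a single group-separable profile in which a given metric $i$ is outvoted. Take $\{A,A'\}$; a two-element subset trivially has a separation. Let metric $i$ prefer $A$ and the other $n-1\geq 2$ metrics prefer $A'$, with the other models placed uniformly below both, e.g. identical for all metrics. Then $A'\succ_M^D A$ against metric $i$.
\end{itemize}
One subtlety: dictatorship is quantified over profiles in the domain. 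Since the domain $\text{prof}(\mathcal{A},\mathcal{D},\Phi)$ is fixed, I will read the axiom on the group-separable domain, i.e. the restricted domain of all group-separable profiles, and state this convention explicitly. This is the same convention as for Theorem~2.

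\textbf{Transitivity (the main step).} Since the relation is a tournament, it suffices to rule out a 3-cycle $A\succ B\succ C\succ A$. Apply group separability to $S=\{A,B,C\}$. Some nonempty proper $E\subset S$ is a separation for every $P_{D,i}$. Its complement is also a separation, so without loss of generality $E$ is a singleton, say $E=\{A\}$. Each metric then places $A$ either above both $B$ and $C$, or below both.

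Let $p$ be the number of metrics placing $A$ on top. The cycle needs $A\succ_M B$, so $p>n/2$. It also needs $C\succ_M A$, so the number of metrics placing $A$ below both, namely $n-p$, exceeds $n/2$. These two requirements contradict each other. By symmetry the same argument covers $E=\{B\}$ and $E=\{C\}$, and the reversed cycle.

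Hence the majority tournament has no 3-cycle. A tournament without 3-cycles is transitive, since any cycle in a tournament contains a 3-cycle. Completeness and transitivity together give Social Ordering.

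\textbf{Expected obstacle.} I expect the difficulty to lie less in the combinatorics than in reconciling the definitions: the weak versus strict relation, tie-breaking, and which domain the axioms quantify over. I will fix these conventions explicitly at the start.
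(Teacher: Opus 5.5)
Your proof is correct. For completeness, Weak Pareto, IIA and Non-Dictatorship it follows the paper's argument. The paper's dictator-breaking profile has metric $i$ ranking $A\succ A'\succ A''$ against $A'\succ A\succ A''$ for all other metrics, with the remaining models ranked identically above. Yours drops $A''$ and puts the remaining models identically below, which is the same idea.

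The real difference is transitivity. The paper does not argue it directly. It cites Ballester--Haeringer (group separability implies medium restriction) and then Inada (medium-restricted profiles with an odd number of voters have a transitive majority relation). You prove it directly. On a triple, a common separation can be taken to be a singleton, so some model is never in the middle of any metric's ranking of that triple. Its majority comparisons with the other two models are then decided by the same set of metrics, so it cannot beat one and lose to the other, as a 3-cycle would require. Completeness then turns ``no 3-cycle'' into transitivity. This is exactly the never-middle argument behind Inada's result.

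Your route buys self-containment. It also makes explicit that only the triple-wise consequence of group separability is used, and that oddness enters only through completeness. The paper's citation is shorter and places the result inside the larger medium-restricted domain.

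One small tightening is needed in your Non-Dictatorship step. The remark that a two-element subset has a separation is beside the point, since group separability quantifies over $S$ with $|S|\ge 3$. Instead, note that every such $S$ contains at least one of the other models. The lowest of these in the common order is ranked last in $S$ by every metric, so it is a common separation.
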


\subsection{Distance-Restricted Preferences}
So far we have investigated two different origins of structure in benchmarks: structure from an axis over models resulting in each metric's ``sweet spot" and structure from clustering of models with respect to their performance on different metrics. Now we deal with structure emerging from a situation of \textit{near-consensus} among metrics. This is particularly valuable in applications when benchmarking aims at comparing models in terms of a single evaluation criterion (e.g., robust accuracy as used in \citet{gsd}) which, however, is too complex to be expressed in terms of a single metric so that it is being approximated by a set of metrics.

\begin{definition}
A profile $R_D$ is called \textbf{distance-restricted to degree $p \in \mathbb{N}$}, if for all $i,j \in [n]$ we have that $d_S(P_{D,i},P_{D,j})\leq p$, where $d_S(P_{D,i},P_{D,j})$ is defined as \[\left| \left\{ (A,A^{'}) \mid A \succ_{P_{D,i}} A^{'} \text{ and } A^{'} \succ_{P_{D,j}} A \right\} \right|.\] The expression $d_S(P_{D,i},P_{D,j})$ is the \textbf{swap distance} counting the number of disagreeing pairs between $P_{D,i}$ and $P_{D,j}$.
\end{definition}

The distance $d_S$ coincides with the Kendall Tau distance between strict rankings \cite{kendall}. In social choice literature, it is also known as the Kemeny distance between two preference orders, measuring the number of pairwise comparisons on which two orders disagree \cite{kemeny59,kemeny1962mathematical,baigent87,bossert92, burak18}. 

\textbf{Interpretation in benchmarking.} A $p$-distance-restricted profile means that any two metrics disagree on at most $p$ pairwise comparison of models. For instance, all three metrics \textit{exact\_match}, \textit{prefix\_exact\_match} and \textit{quasi\_exact\_match} from HELM measure the fraction of instances where the correct answer matches the model's prediction. While \textit{exact\_match} is very strict, \textit{prefix\_exact\_match} and \textit{quasi\_exact\_match} allow for additional tokens in the answer (e.g., explanation, punctuation differences). In situations when metrics assess the same latent construct, we expect a near-consensus across those metrics. They do not disagree at multiple pairwise comparison of models on a fixed dataset at once. This results in almost identical rankings of models, with only a small number of pairwise disagreements, and thus tiny swap distance. In the case of $p=1$ where any two metrics disagree on at most one pair of models, Condorcet cycles cannot appear and we get the following result.

\begin{theorem}
Fix a dataset $D\in\mathcal D$ and a finite set of algorithms $\mathcal{A}$. Assume the induced profile $R_D$ is distance-restricted to degree $1$. Then, the weak pairwise majority relation $\succeq_M^D$ is transitive and complete relation on $\mathcal A$. It satisfies Non-Dictatorship, Weak Pareto and IIA.
\end{theorem}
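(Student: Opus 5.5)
The plan is to reduce the degree-$1$ distance restriction to a very rigid structural fact: \emph{at most two distinct strict rankings occur in the profile, and they differ by a single adjacent transposition}. Transitivity of $\succeq_M^D$ then follows by direct inspection, and the three Arrow axioms follow largely from the pairwise nature of majority voting.

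\textbf{Step 1 (structure).} Work with the strict relations $P_{D,i}$; completeness of each is ensured by the fixed tie-breaking rule. If $d_S(P_{D,i},P_{D,j})=1$, then $P_{D,i}$ and $P_{D,j}$ disagree on exactly one pair $\{X,Y\}$. That pair must be adjacent in both orders. Indeed, if some $Z$ lay strictly between $X$ and $Y$ in $P_{D,i}$, then reversing $X$ and $Y$ while keeping every pair involving $Z$ fixed would violate transitivity. Hence $P_{D,j}$ is $P_{D,i}$ composed with one adjacent swap.

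Now suppose three pairwise distinct orders $P_1,P_2,P_3$ occur. Write $P_2=P_1\circ\tau_{a}$ and $P_3=P_1\circ\tau_{b}$ with $a\neq b$. Then the inversion sets of $P_2$ and $P_3$ relative to each other are $\{a\}\triangle\{b\}$, so $d_S(P_2,P_3)=2>1$, a contradiction. So the profile consists of $n_1$ copies of an order $L$ and $n_2$ copies of $L'$, where $L'$ is $L$ with one adjacent pair $X,Y$ swapped.

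\textbf{Step 2 (transitivity and completeness).} Completeness is immediate from the definition of $\succeq_M^D$. For every pair other than $\{X,Y\}$, all metrics agree, so the majority relation coincides with $L$ there. On $\{X,Y\}$ the outcome follows the majority: it is $L$ if $n_1>n_2$, $L'$ if $n_2>n_1$, and a tie $X\sim Y$ if $n_1=n_2$. In each case the result is either a linear order or $L$ with two adjacent elements merged into an indifference class. Both are complete and transitive, so no Condorcet cycle can arise.

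\textbf{Step 3 (axioms).} The remaining axioms are handled as follows.
\begin{itemize}
\item \emph{Weak Pareto}: if every metric strictly prefers $A$ to $A'$, then $M_D(A,A')=n>0=M_D(A',A)$.
\item \emph{IIA}: $M_D(A,A')$ depends only on the metrics' comparisons of $A$ and $A'$, so IIA holds even on the restricted domain.
\item \emph{Non-Dictatorship}: this is where I expect the only real obstacle, because it is a statement over the whole domain of degree-$1$ distance-restricted profiles rather than a single profile. It also needs $n\ge 3$; for $n=2$, any disagreement produces only a tie, so it does not break dictatorship in the strict sense. For $n\geq3$, given any metric $i$, I would exhibit an admissible profile in which metric $i$ holds $L'$ and all others hold $L$. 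Its distance to every other metric is $1$, yet the majority strictly prefers the pair as in $L$, contradicting $X\succ_{P_{D,i}}Y \Rightarrow X\succ_P Y$. Hence no metric is a dictator.
\end{itemize}

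I would state explicitly the convention on ties, namely that the argument runs on the tie-broken $P_{D,i}$, and the $n\ge3$ requirement for Non-Dictatorship.
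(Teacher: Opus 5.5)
Your proposal is correct and follows essentially the paper's route: the adjacent-swap lemma, all deviating metrics swapping the same adjacent pair of a reference order, the majority agreeing with that order except on this pair, and the same one-metric-against-the-rest witness for Non-Dictatorship. Your explicit description of the outcome as $L$, $L'$, or $L$ with the adjacent pair tied is a slightly cleaner justification of transitivity than the paper's, but one side remark of yours is off: under the paper's definition a dictator must force a \emph{strict} social preference, so the tie $X\sim Y$ produced when $n=2$ already defeats dictatorship, hence $n\ge 2$ suffices and the paper's assumption $n\ge 3$ is only a convenience.
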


\section{Experiments}\label{experiments}
\subsection{Experimental Setup}\label{setupexp}
For the verification of restricted preference domains we use HELM MMLU benchmark (v1.0.0) which includes 23 language models evaluated on 57 MMLU subjects. Each subject is a single dataset reporting multiple evaluation metrics per model. For each run directory we identify the MMLU subject as well as the evaluated model and extract the subject-specific mean metric values reported by HELM. When multiple runs exist for the same model-subject pair under different evaluation settings, we keep the most frequent setting and average across repeats. 

We run our tests on four sets of models and two sets of metrics. $\mathcal{A}_1$ includes mostly high-performing models (w.r.t. accuracy); $\mathcal{A}_2$ is an intermediate set consisting of high-performing models and mid-range open models; $\mathcal{A}_3 \subset \mathcal{A}_2$ represents a mixed set combining one high-performing model with three smaller open models; $\mathcal{A}_4 \subset \mathcal{A}_1$ contains five high-performing models (see Appendix \ref{app:exphelm} for the full list of models). The set $\Phi_{acc}$ consists of three closely related accuracy metrics \textit{exact\_match}, \textit{prefix\_exact\_match}, \textit{quasi\_exact\_match}.  $\Phi_{mix}$ includes two accuracy metrics \textit{exact\_match}, \textit{quasi\_exact\_match} and one efficiency metric \textit{inference\_runtime}. Metrics are considered as voters, while models act as alternatives. For each subject and each metric we sort the fixed models by their scores to obtain a preference ranking. If two models have the same score, we apply a deterministic rule to resolve the conflict, sorting the models alphabetically by their model identifier. For each MMLU subject we get a profile of preference rankings over the models.

For single-peakedness, we run three tests. First, we focus on $(\mathcal{A}_1, \Phi_{acc})$.  For each subject we enumerate all possible permutations of the models that correspond to potential axes and check whether each metric’s ranking is single-peaked with respect to that axis. In the second and third test, we use $(\mathcal{A}_3, \Phi_{mix})$ and $(\mathcal{A}_2, \Phi_{mix})$ correspondingly, and then proceed analogously as above. Using the profiles of strict metrics' rankings, we test for group separability. Within any set $S$ of models, we repeatedly search for a nontrivial subset $E \subset S$ that all metrics rank entirely above or entirely below $S\setminus E$. If such a separation exists, we split $S$ into $E$ and $S\setminus E$ and continue on each part until only sets of fewer than three models remain. We examine $(\mathcal{A}_2, \Phi_{acc})$, $(\mathcal{A}_3, \Phi_{mix})$ and $(\mathcal{A}_1, \Phi_{mix})$. To test for distance-restrictedness, we compute the swap distance $d_S$ between every pair of metrics' rankings for each dataset. A subject is then declared distance-restricted to degree 1 if the maximum swap distance is at most 1. Here, we run our experiments on $(\mathcal{A}_4, \Phi_{acc})$, $(\mathcal{A}_3, \Phi_{acc})$ and $(\mathcal{A}_2, \Phi_{mix})$. Further experiments on benchmark suites such as PMLB and OpenML are reported in Appendix \ref{app:exp_other}. For domains which satisfy our structural assumptions, we also compute the overall ranking across all datasets by using the commonality sharing aggregation rule described in Section~\ref{section_aggregation_across_datasets}. 

\subsection{Results}

In our first test for single-peakedness with $(\mathcal{A}_1, \Phi_{acc})$, we find that the induced profiles are single-peaked on all 57 subjects. This is consistent with the interpretation that the three accuracy metrics measure a shared latent construct, resulting in a restricted preference domain.  In the second experiment, for  $(\mathcal{A}_3, \Phi_{mix})$, we find that single-peakedness is also satisfied across all 57 datasets. The finding suggests a trade-off between accuracy and inference time: GPT-4-0613 is highly accurate, while other less accurate models such as Llama-2 are faster than GPT-4-0613. For $(\mathcal{A}_2, \Phi_{mix})$ single-peakedness holds only for three out of 57 datasets. Compared to the second test, we have now added more high-performing models such as Claude-3-Opus and Claude-3-Sonnet, which erases the single-peaked structure across the preferences induced by the multi-objective metric set $\Phi_{mix}$. For the first two tests, which satisfy single-peakedness, we aggregate the pairwise majority relations of all datasets into one relation that represents the whole benchmark suite.  Figure~\ref{figure_experiment1_2} shows the most representative (in terms of data depth: the deepest) majority relation for the first two experiments. The best performing models according to the most representative ranking are Claude-3-Opus for the first, and GPT-4-0613 for the second experiment. For other tests concerning group separability and distance-restrictedness, the results and the corresponding aggregated rankings can be found in Appendix~\ref{app:exphelm}.

\begin{figure}[h]
\centering
 \includegraphics[width=0.25\textwidth]{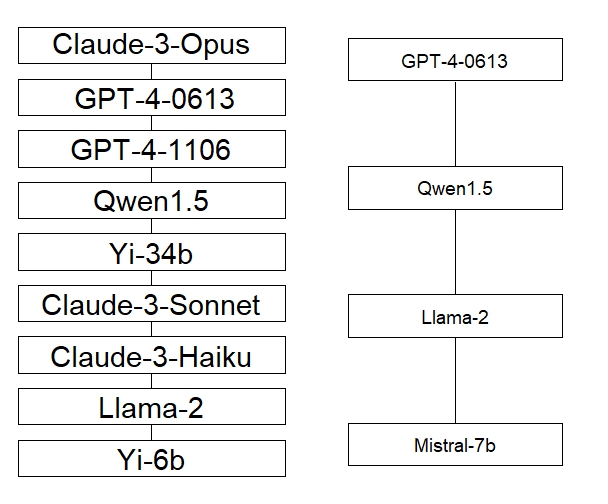} 
\caption{Aggregated ranking across all MMLU subjects according to the commonality sharing rule for two domains of single-peaked preferences (left: $(\mathcal{A}_1, \Phi_{acc})$, right: $(\mathcal{A}_3, \Phi_{mix})$).}\label{figure_experiment1_2}
\end{figure}

\section{Discussion}\label{disc}
What conditions must multi-criteria benchmarks satisfy to produce rankings that are both coherent and stable? Our starting point was that the pathologies in Section~\ref{motiv} appear only in the unstructured combinations of rankings. The results of our study on single-peaked, group-separable and distance-restricted preferences on HELM MMLU suggest that meaningful aggregation is possible, recovering Arrow's axioms under these structural conditions. In particular, we find that whether the induced domain of profiles satisfies one of our sufficient structural assumptions strongly depends on the choice of models and metrics. When metrics aim to measure one underlying capability, comparing strong models such as Claude-3-Opus and GPT-4-0613 with each other may be adequate. Yet, when metrics encode competing objectives, focusing only on highly accurate models can make the domain less structured. In these cases, the model set should be chosen to \textit{represent} the trade-off.

Our study is descriptive: we verify domain restrictions on particular benchmarks under different choices of metrics and models. A natural next step is to examine restricted preference domains on other benchmarks and add inferential guarantees for our results. Another direction is to extend the social choice lens to strategic behavior: restricted preference domains such as single-peakedness can enable strategy-proof aggregation \cite{moulin}. This suggests studying when benchmark aggregation is robust to strategic behavior (e.g., by optimization against the benchmark) and how structure in the rankings can be used to design evaluation methods whose conclusions remain stable under such incentives.

\section*{Impact Statement}
This paper presents work whose goal is to advance the field of Machine
Learning. There are many potential societal consequences of our work, none of which we feel must be specifically highlighted here.


\bibliography{example_paper}
\bibliographystyle{icml2026}

\newpage
\appendix
\onecolumn
\section{Appendix}\label{app}

\subsection{Single-Peaked Preferences}\label{appsp}
 Fix a dataset $D$, a set of models $\mathcal{A}$, and a multivariate performance measure $\Phi=(\phi_1 , \dots , \phi_n)$. After applying a fixed deterministic tie-breaking rule, each metric $i$ induces a strict linear order $P_i$ on $\mathcal{A}$.
\begin{lemma} \cite{black48}
\label{lemsp1}
Let $R_D$ be a single-peaked profile. Then the pairwise majority relation $\succeq_M^D$ is transitive.
\end{lemma}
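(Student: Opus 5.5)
The plan is to prove transitivity triple by triple. Take any three models $A,A',A''\in\mathcal A$ and let $L\in\mathcal L_{\mathcal A}(R_D)$ be a common admissible axis for the profile. Relabel so that $A\succ_L A'\succ_L A''$, so $A'$ lies between the other two on the axis.

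\textbf{Step 1: the middle model is never worst.} The key structural fact is that for every metric $i$, $A'$ is not the least preferred of $\{A,A',A''\}$ under $P_{D,i}$. Let $b$ be the peak of $P_{D,i}$ on $\mathcal A$. If $b$ lies weakly to the left of $A'$ on $L$, then $A'$ sits between $b$ and $A''$, so admissibility gives $A'\succ_{P_{D,i}} A''$. The case where $b$ lies weakly to the right of $A'$ is symmetric and gives $A'\succ_{P_{D,i}} A$. I also need to check that single-peakedness restricts to subsets: the restriction of $L$ to any $S\subseteq\mathcal A$ is admissible for $P_{D,i}$ restricted to $S$. This follows because the peak on $S$ is the element of $S$ closest to $b$ on one side, and the monotonicity on each side of $b$ carries over. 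So every triple satisfies Sen's \emph{value restriction} in its ``never-worst'' form. It remains to show directly that majority is transitive on that triple.

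\textbf{Step 2: counting.} Since the $P_{D,i}$ are strict, $M_D(X,Y)+M_D(Y,X)=n$ for distinct $X,Y$. Only four strict orders of the triple are allowed: $A\,A'\,A''$, $A'\,A\,A''$, $A'\,A''\,A$ and $A''\,A'\,A$. Let their counts be $x_1,\dots,x_4$, with $\sum_j x_j=n$. Each pairwise margin is then an explicit linear expression, for instance
\[
M_D(A,A')-M_D(A',A)=x_1-x_2-x_3-x_4 .
\]
A cycle $A\succ A'\succ A''\succ A$, or its reverse, would require sign conditions that sum to a contradiction. For example, $A$ beating $A'$ needs $x_1>n/2$, while $A''$ beating $A$ needs $x_3+x_4>n/2$. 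This rules out strict cycles, so the strict majority relation is acyclic, and it is transitive on the triple whenever $n$ is odd, because then there are no ties. Since every triple is transitive, the relation is transitive on $\mathcal A$.

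\textbf{Main obstacle: ties when $n$ is even.} I expect the hard step to be ties of the \emph{weak} relation $\succeq_M^D$ when $n$ is even. Consider one voter with order $A\,A'\,A''$ and one with $A'\,A''\,A$; both are single-peaked on $L$. Majority then gives $A\sim A'$ and $A\sim A''$ but $A'\succ A''$, so weak transitivity fails. The counting argument therefore yields only acyclicity (quasi-transitivity) in general.

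To close the proof I would first try to invoke Black's original hypothesis that the number of voters is odd, which is implicit in the citation. Alternatively, I would argue that the fixed deterministic tie-breaking rule is extended to the majority relation, and show transitivity of the resulting strict order. If neither assumption is available, the lemma must be weakened to quasi-transitivity, which is still enough for the commonality sharing rule of Section~\ref{section_aggregation_across_datasets}, since that rule needs acyclic aggregates.
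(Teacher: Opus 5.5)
Your route differs from the paper's, whose proof is a one-line appeal to Black's theorem. You instead give a self-contained argument. Admissibility of $L$ with respect to the global peak of each $P_{D,i}$ shows that the $L$-middle element of every triple is never ranked last, so your restriction-to-subsets remark is not actually needed. The resulting margin formulas in $x_1,\dots,x_4$ then exclude majority cycles, which yields transitivity once there are no majority ties, e.g.\ for $n$ odd with strict individual orders.

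What your version buys is that it exposes the hypothesis the citation hides. Your two-voter profile $A\,A'\,A''$, $A'\,A''\,A$ is single-peaked on $L$. It gives $A''\succeq_M^D A\succeq_M^D A'$ but $A'\succ_M^D A''$, so the lemma as stated is false for even $n$. Black's result needs an odd number of voters, a condition the paper imposes in the group-separable case but silently drops here. All experiments use $n=3$, so nothing empirical is affected, but the lemma and the single-peaked theorem resting on it need ``$n$ odd'' added.

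Three refinements to your write-up. (i) Your second fallback fails, because extending a fixed tie-breaking rule to the majority relation can create cycles. In your own example, resolving the ties as $A''$ over $A$ and $A$ over $A'$ (which an alphabetical rule does under a suitable naming of the models) gives $A''\succ A\succ A'\succ A''$. (ii) Excluding strict $3$-cycles gives only acyclicity, which is weaker than quasi-transitivity. The clean statement is the sandwich $m(A,A')\le m(A,A'')\le m(A',A'')$ for $m(X,Y)=M_D(X,Y)-M_D(Y,X)$, the gaps being $2x_2$ and $2x_3$. It gives transitivity of the strict majority relation for every $n$, and full transitivity of $\succeq_M^D$ whenever no margin vanishes. (iii) The identity $M_D(X,Y)+M_D(Y,X)=n$ requires $M_D$ to be computed from the tie-broken $P_{D,i}$, as in the appendix, rather than from $R_{D,i}$ as in the main-text definition. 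Under the latter reading an indifferent metric acts as an abstention. Adding to your example a third metric indifferent among all three models (tie-broken, say, to $A\,A'\,A''$) then reproduces the failure with $n=3$.
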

\begin{proof}
Since $R_D$ is single-peaked, there exists an ordering $L$ over $\mathcal{A}$ with $L \in \mathcal{L}_\mathcal{A}(R_{D})$. By \citet{black48} the pairwise majority is transitive for profiles that are single-peaked with respect to a common axis. Hence, $\succeq_M^D$ is transitive.
\end{proof}
\begin{theorem}
Let $R_D$ be a single-peaked profile induced by dataset $D \in \mathcal{D}$. Then the pairwise majority relation $\succeq_M^D$ satisfies Non-Dictatorship, Weak Pareto, IIA and yields a complete and transitive relation on $\mathcal{A}$.
\end{theorem}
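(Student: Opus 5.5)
We verify the four properties in turn, working with the strict orders $P_i$ obtained after tie-breaking. Write $M_D(A,A')$ for the number of metrics with $A \succ_{P_i} A'$. Then $A \succeq_M^D A'$ holds exactly when $M_D(A,A') \ge M_D(A',A)$.

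\textbf{Completeness and transitivity.} Completeness is immediate: for any $A,A'$, one of $M_D(A,A')\ge M_D(A',A)$ or $M_D(A',A)\ge M_D(A,A')$ holds. Transitivity is Lemma~\ref{lemsp1}. To be safe about ties (e.g.\ even $n$), I would spell out Black's argument. Fix $A,A',A''$ with $A\succeq_M^D A'$ and $A'\succeq_M^D A''$. Restrict the common axis $L$ to these three models, so one of them is the middle element. Single-peakedness then forbids any $P_i$ from ranking the middle element last. This leaves at most four admissible strict rankings of the triple. A direct count shows that majority cycles, including weak cycles with a tie, are impossible: the median-peak (Condorcet) argument gives that the restricted majority relation is weakly transitive on each triple. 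This step is the main obstacle, because Black's theorem is classically stated for an odd number of voters with strict majority. I would therefore check the even-$n$ weak-majority case explicitly on the four admissible orders. If that check fails, I would fall back to asserting transitivity of the strict part together with the lemma as stated.

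\textbf{Weak Pareto.} If $A\succ_{P_i}A'$ for all $i$, then $M_D(A,A')=n>0=M_D(A',A)$, so $A$ is strictly preferred by majority.

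\textbf{IIA.} $M_D(A,A')$ and $M_D(A',A)$ depend only on how each $P_i$ ranks the pair $\{A,A'\}$. Hence two single-peaked profiles that agree on that pair for every metric yield the same majority comparison.

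\textbf{Non-Dictatorship.} Assuming $n\ge 2$ (for $n\ge 3$ this is clear), construct a single-peaked profile on which a given metric $i$ strictly prefers $A$ to $A'$ while all other metrics prefer $A'$ to $A$. For example, take the axis $L$ itself and its reverse as the two orders; both are single-peaked with respect to $L$, with peaks at the two ends. Then $M_D(A',A)\ge M_D(A,A')$ when $n\ge2$, so metric $i$'s strict preference is not reproduced as a strict social preference. Thus no metric is a dictator over the single-peaked domain.

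Note that Non-Dictatorship is a property of the operator on a domain rather than on one fixed profile. I would phrase it as a statement about $B_M$ over the class of single-peaked profiles, which contains such counterexamples.
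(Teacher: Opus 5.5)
Your outline matches the paper's: completeness by definition, transitivity from Black's lemma, Weak Pareto and IIA by counting, and Non-Dictatorship via a profile whose peaks sit at the two ends of $L$. Your $L$-and-reverse profile is the general form of the paper's three-model example. The last three parts are fine for $n\ge 2$.

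The gap is transitivity, exactly where you suspected. The ``direct count'' on the four admissible orders does not rule out weak majority cycles when $n$ is even, because such cycles exist. Take $\mathcal{A}=\{a,b,c\}$ with axis $a\succ_L b\succ_L c$ and $n=2$:
\[
P_1\colon a\succ b\succ c, \qquad P_2\colon b\succ c\succ a.
\]
Both are single-peaked on $L$, since the middle element $b$ is never last. The majority counts are:
\[
M_D(a,b)=M_D(b,a)=1,\qquad M_D(a,c)=M_D(c,a)=1,\qquad M_D(b,c)=2>0=M_D(c,b).
\]
Hence $c\succeq_M^D a$ and $a\succeq_M^D b$, yet $c\not\succeq_M^D b$, so $\succeq_M^D$ is not transitive. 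Duplicating both metrics gives the same failure for every even $n$.

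In your notation, let the orders $lmr, mlr, mrl, rml$ have multiplicities $n_1,\dots,n_4$. The majority margins are then
\[
\mu(l,m)=n_1-n_2-n_3-n_4,\qquad \mu(l,r)=n_1+n_2-n_3-n_4,\qquad \mu(m,r)=n_1+n_2+n_3-n_4.
\]
The example above is $n_1=n_3=1$, $n_2=n_4=0$, which gives $\mu(l,m)=\mu(l,r)=0<\mu(m,r)$.

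What these formulas do give is transitivity of the strict part $\succ_M^D$ for every $n$. For instance, $\mu(l,m)>0$ forces $\mu(l,r)>2n_2\ge 0$, and $\mu(l,r)>0$ forces $\mu(m,r)>0$; the mirror cases are symmetric. For odd $n$ there are no majority ties between strict orders, so $\succeq_M^D$ is then a linear order.

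Your fallback of citing Lemma~\ref{lemsp1} as stated does not repair this. Black's theorem carries exactly this parity restriction. Lemma~\ref{lemsp1}, and the paper's proof that simply invokes it, drop the restriction silently, so the paper has the same hole. A correct version of the statement needs either the extra hypothesis that $n$ is odd (as the paper imposes for group separability) or the weaker conclusion that $\succeq_M^D$ is quasi-transitive.
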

\begin{proof}
The pairwise majority relation $\succeq_M^D$ is complete by definition. By Lemma~\ref{lemsp1} it is transitive. Furthermore, the relation $\succeq_M^D$ satisfies \textit{IIA} by construction. \textit{Weak Pareto} holds because if every metric prefers model $A$ to model $A^{'}$, then the pairwise majority count for $A$ over $A^{'}$ is $n$, and $A\succ_M^DA^{'}$. For \textit{Non-Dictatorship} consider the following. Fix a metric $i$. Since $R_D$ is single-peaked, there exists an ordering $L$ over $\mathcal{A}$ with $L \in \mathcal{L}_\mathcal{A}(R_{D})$. Assume that three models $A,A^{'},A^{''}$ are ordered on $L$ such that $A\succ_L A^{'} \succ_L A^{''}$. Further assume that a metric $i$ has its peak at $A$ so that $A \succ_{P_{D,i}} A^{'} \succ_{P_{D,i}} A^{''}$ and every other metric $j \neq i$ has its peak  at $A^{''}$ so that $A^{''} \succ_{P_{D,j}} A^{'} \succ_{P_{D,j}} A$. Then, pairwise majority over all metrics does not rank $A$ over $A^{''}$. It follows that metric $i$ is not a dictator. Since $i$ was arbitrarily chosen, no dictator exists.
\end{proof}

\subsection{Group-Separable Preferences}\label{appgs}
 Fix a dataset $D$, a set of models $\mathcal{A}$, and a multivariate performance measure $\Phi=(\phi_1 , \dots , \phi_n)$. After applying a fixed deterministic tie-breaking rule, each metric $i$ induces a strict linear order $P_i$ on $\mathcal{A}$.
\begin{lemma}\cite{inada64,inada69}\label{lemgs1}
Let $R_D$ be a group-separable profile. Assume $n$ is odd (with $n \geq 3)$. Then the strict pairwise majority relation $\succ_M^D$ is transitive.
\end{lemma}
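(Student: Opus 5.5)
The plan is to reduce transitivity of $\succ_M^D$ to the absence of a Condorcet cycle on three models, and then show that group separability forbids such a cycle. Since $n$ is odd and each $P_{D,i}$ is a strict linear order (after tie-breaking), for any two distinct models exactly one of $A\succ_M^D A'$ or $A'\succ_M^D A$ holds. The strict majority relation is therefore a tournament on $\mathcal{A}$, and a tournament is transitive if and only if it contains no directed 3-cycle. It thus suffices to fix an arbitrary triple $S=\{A,A',A''\}\subseteq\mathcal{A}$ and show that the majority relation restricted to $S$ is acyclic.

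Apply the definition of group separability to this $S$. It gives a nonempty proper subset $E\subset S$ lying in $\mathcal{S}_{R_D}$. So every metric $i$ places $E$ entirely above or entirely below $S\setminus E$; the direction may depend on $i$. Since $|S|=3$, one of the two blocks $E$ or $S\setminus E$ is a singleton; call that model $X$ and the other two $Y,Z$. Each metric then ranks $X$ either first or last within $S$. In other words, $X$ is never in the middle position for any voter restricted to $S$. This is exactly Sen's ``never-middle'' value restriction on the triple.

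Next, rule out a cycle directly. Suppose, for contradiction, that the majority is cyclic on $S$, say $X\succ_M^D Y\succ_M^D Z\succ_M^D X$ (the reverse orientation is symmetric). Partition the voters into $T$, those ranking $X$ on top within $S$, and $U$, those ranking $X$ at the bottom. Every voter in $T$ prefers $X$ to both $Y$ and $Z$, and every voter in $U$ prefers both $Y$ and $Z$ to $X$. Hence the majority comparison of $X$ with $Y$ and of $X$ with $Z$ is decided by the same comparison of $|T|$ with $|U|$. Because $n$ is odd, $|T|\neq|U|$, so either $X$ beats both $Y$ and $Z$ or $X$ loses to both. Either way $X$ is not part of a cycle on $S$, which is a contradiction. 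So no triple is cyclic, and the tournament $\succ_M^D$ is transitive.

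The step needing the most care is the reduction in the first paragraph. I have to check that oddness of $n$ together with strict individual orders really makes $\succ_M^D$ complete and asymmetric, so that acyclicity on triples suffices. I also have to note that the paper's $M_D$ uses $\succeq$, which under strict $P_{D,i}$ coincides with strict preference. The rest is the elementary counting above, which could alternatively be obtained by citing Sen's value-restriction theorem.
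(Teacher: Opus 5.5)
Your proof is correct and follows the same route as the paper. Group separability on each triple forces an alternative that is never ranked in the middle (what Ballester--Haeringer call \emph{medium-restriction}), and with $n$ odd this rules out majority cycles. The paper simply cites Ballester--Haeringer and Inada for these two steps, whereas you prove both directly, via the tournament reduction and the comparison of $|T|$ with $|U|$.
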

\begin{proof}
By \citet{ballester11}, the profile $R_D$ is medium-restricted, since this condition is implied by group separability. By \citet{inada64,inada69}, the pairwise majority relation is transitive for a medium-restricted profile, given an odd number of metrics under consideration. Hence, $\succ_M^D$ is transitive.
\end{proof}
\begin{theorem}
Let $R_D$ be a group-separable profile induced by dataset $D \in \mathcal{D}$. Assume $n$ is odd (with $n \geq 3$). Then the pairwise majority relation $\succ_M^D$ satisfies Non-Dictatorship, Weak Pareto, IIA and yields a complete and transitive order on $\mathcal{A}$.
\end{theorem}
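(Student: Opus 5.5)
The plan mirrors the single-peaked case. I will verify the four properties separately, using Lemma~\ref{lemgs1} for transitivity.

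\textbf{Completeness.} All $P_{D,i}$ are strict linear orders after tie-breaking, and $n$ is odd. So for any distinct $A,A^{'}$ exactly one of $M_D(A,A^{'})$ and $M_D(A^{'},A)$ exceeds $n/2$. Hence exactly one of $A\succ_M^D A^{'}$ or $A^{'}\succ_M^D A$ holds, and the strict majority relation is complete (asymmetric and total on distinct pairs). Together with Lemma~\ref{lemgs1}, $\succ_M^D$ is then a strict linear order on $\mathcal A$.

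\textbf{IIA and Weak Pareto.} IIA is immediate: $M_D(A,A^{'})$ depends only on how each metric compares $A$ and $A^{'}$. For Weak Pareto, if every metric ranks $A$ above $A^{'}$, then $M_D(A,A^{'})=n>0=M_D(A^{'},A)$, so $A\succ_M^D A^{'}$.

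\textbf{Non-Dictatorship (main obstacle).} Here I must exhibit, for each fixed metric $i$, a group-separable profile on which $i$ is overruled. Since a dictator must be decisive on every profile of the (restricted) domain, a counterexample inside the domain suffices.

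The natural candidate: take two models $A,A^{'}$ with $A\succ_{P_{D,i}} A^{'}$, and let every $j\neq i$ rank $A^{'}\succ A$. Since $n\ge 3$, the $n-1\ge 2$ opposing metrics form a majority, so $A^{'}\succ_M^D A$.

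The remaining check is that this profile is group-separable. I would construct it so that every metric orders the other models identically and places $A,A^{'}$ adjacent at the top. For any $S\subseteq\mathcal A$ with $|S|\ge 3$:
\begin{itemize}
\item if $S$ contains some model other than $A,A^{'}$, then the lowest such model (in the common order) forms a singleton bottom separation shared by all metrics;
\item otherwise $S\subseteq\{A,A^{'}\}$ has fewer than three elements, so it imposes no constraint.
\end{itemize}
Hence $\mathcal S_{R_D}\neq\varnothing$ for every relevant $S$, and $i$ is not a dictator. Since $i$ was arbitrary, no dictator exists.

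As in the single-peaked theorem, this argument treats Non-Dictatorship over the class of group-separable profiles rather than the single fixed profile $R_D$, and I will state that interpretation explicitly.
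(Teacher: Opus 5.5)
Your proposal is correct and matches the paper's proof step for step. Completeness comes from strict orders plus odd $n$, transitivity from Lemma~\ref{lemgs1}, and IIA and Weak Pareto directly from the majority count. Non-Dictatorship is shown via an explicit group-separable profile in which metric $i$ is outvoted on one pair while all metrics otherwise agree.

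The only difference is cosmetic. The paper uses a triple $A,A',A''$ with $\{A''\}$ as the common bottom element and the remaining models stacked above; you put the contested pair at the top with a common order below. Your check that a bottom singleton separates every $S$ with $|S|\ge 3$ is more complete than the paper's one-line assertion that its profile is group-separable.
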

\begin{proof}
Fix $A \neq A^{'}$. Since each metric induces a strict linear order $P_i$, for every $i \in [n]$, exactly one of $A\succ_{P_i}A^{'}$ and $A^{'}\succ_{P_i}A$ holds. Hence, the two majority counts sum up to $n$. Since $n$ is odd, either $A\succ_M^D A^{'}$ or $A^{'} \succ_M^D A$ holds. It follows that the pairwise majority relation $\succ_M^D$ is complete. By Lemma~\ref{lemgs1} it is transitive. The relation $\succ_M^D$ fulfills the condition \textit{IIA} by construction. The condition \textit{Weak Pareto} is satisfied because if every metric prefers model $A$ to model $A^{'}$, then the pairwise majority count for $A$ over $A^{'}$ is exactly $n$, and, thus, $A\succ_M^DA^{'}$. For \textit{Non-Dictatorship} consider the following. Fix a metric $i$. Consider three models $A,A^{'},A^{''} \in \mathcal{A}$. Assume that metric $i$ ranks $A \succ_{P_{D,i}} A^{'} \succ_{P_{D,i}} A^{''}$ and every other metric $j \neq i$ ranks $A^{'} \succ_{P_{D,j}} A \succ_{P_{D,j}} A^{''}$. Since $A^{''}$ is ranked consistently at the bottom by every metric, $\{A^{''}\}$ is a separation of $\{A,A^{'},A^{''}\}$ with respect to $P_{D,l}$ for all $l \in [n]$. Consider all other models $Z \in \mathcal{A} \setminus \{A,A^{'},A^{''}\}$. Fix $Z= \{Z_1, \dots Z_q \}$ and assume $Z_1 \succ_{P_{D,l}} \dots \succ_{P_{D,l}} Z_q$ for all $l \in [n]$, with all $Z$ ranked above $A, A^{'}, A^{''}$ for every $l$. Thus, the resulting profile is group-separable by definition. The pairwise majority over all metrics ranks $A^{'}$ over $A$. Therefore, metric $i$ is not a dictator. Since $i$ was arbitrarily chosen, no dictator exists.
\end{proof}

\subsection{Distance-Restricted Preferences}\label{appds}

Fix a dataset $D$, a set of models $\mathcal{A}$, and a multivariate performance measure $\Phi=(\phi_1 , \dots , \phi_n)$. Assume $n\geq 3$. After applying a fixed deterministic tie-breaking rule, each metric $i$ induces a strict linear order $P_i$ over the finite set of models $\mathcal{A}$. Let $P_i,P_j$ be such strict linear orders on $\mathcal{A}$ with $P_i := P_{D,i}$ for all $i$. Let $d_S(P_i,P_j)$ be the number of pairwise comparisons between models in $\mathcal{A}$ on which $P_i$ and $P_j$ disagree. Assume the induced profile $R_D$ is $1$-distance-restricted so that $d_S(P_i,P_j)\leq1$ for all $i,j$.
\begin{lemma}
\label{lemd1}
If $d_S(P_i,P_j)\leq1$, then either $P_i=P_j$ or $P_j$ is obtained from $P_i$ by swapping exactly two adjacent models in $P_i$. 
\end{lemma}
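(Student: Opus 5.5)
The plan is to split on the value of $d_S(P_i,P_j)\in\{0,1\}$. If $d_S(P_i,P_j)=0$, the two strict linear orders agree on every pair of models, so they coincide as sets of ordered pairs and $P_i=P_j$. The remaining case is $d_S(P_i,P_j)=1$. Then there is exactly one unordered pair $\{A,A'\}$, say with $A\succ_{P_i}A'$ and $A'\succ_{P_j}A$, on which the orders disagree, and they agree on every other pair.

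The first step is to show that $A$ and $A'$ are adjacent in $P_i$. Suppose instead that some model $Z$ satisfies $A\succ_{P_i}Z\succ_{P_i}A'$. Since $\{A,Z\}$ and $\{Z,A'\}$ are not the disagreeing pair, $P_j$ agrees with $P_i$ on both of them, so $A\succ_{P_j}Z\succ_{P_j}A'$. By transitivity of $P_j$ this gives $A\succ_{P_j}A'$, which contradicts $A'\succ_{P_j}A$. Hence no model lies strictly between $A$ and $A'$ in $P_i$, and they occupy consecutive positions.

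The second step identifies $P_j$ with the transposition. Let $P'$ be the order obtained from $P_i$ by swapping the adjacent models $A$ and $A'$. Because they are adjacent, $P'$ is again a strict linear order, and it differs from $P_i$ only on the pair $\{A,A'\}$. So $P'$ and $P_j$ agree with $P_i$ on every pair except $\{A,A'\}$, and on that pair both rank $A'$ above $A$. Two strict linear orders that agree on all pairs are equal, so $P_j=P'$.

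The main obstacle is only the adjacency step. Its key point is that transitivity of $P_j$, together with agreement on the two pairs $\{A,Z\}$ and $\{Z,A'\}$, forces $A\succ_{P_j}A'$. Everything else is bookkeeping with the fact that a linear order is determined by its pairwise comparisons.
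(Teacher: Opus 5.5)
Your proof is correct and follows the same route as the paper. Both arguments show by contradiction that the unique disagreeing pair must be adjacent in $P_i$, since an intermediate model $Z$ would create a second disagreement, and then conclude that $P_j$ is the adjacent transposition; your transitivity step ($A\succ_{P_j}Z\succ_{P_j}A'$ forcing $A\succ_{P_j}A'$) is a more rigorous version of the paper's informal ``one of the two models must move past the intermediate model'' argument.
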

\begin{proof}
Assume $d_S(P_i,P_j)\leq1$ and $P_i\neq P_j$. Then there exists a pair of models $A,A^{'} \in \mathcal{A}$ such that $P_i$ ranks $A$ above $A^{'}$ and $P_j$ ranks $A^{'}$ above $A$. If models $A$ and $A^{'}$ are not adjacent in the order $P_i$, then there must exist at least one model $A^{''}$ that lies between $A$ and $A^{'}$ in $P_i$. For $P_j$ to rank $A^{'}$ above $A$, $A^{'}$ must move past $A^{''}$ or $A$ must move past $A^{''}$. This, however, changes at least one additional pairwise comparison relative to $P_i$. We get two pairs on which $P_i$ and $P_j$ disagree which contradicts $d_S(P_i,P_j)\leq1$. Therefore, $A$ and $A^{'}$ are adjacent in $P_i$. Since there is only one disagreement, $P_j$ is obtained from $P_i$ by swapping exactly that pair of models in $P_i$.
\end{proof}
\begin{lemma}
\label{lemd2}
Fix an order $P:=P_1$ such that $d_S(P_i,P_1)\leq1$ for all $i$. There exists a pair $A,A^{'} \in \mathcal{A}$ such that for all $i$ either $P_i=P_1$ or $P_i$ is obtained from $P_1$ by swapping the pair $A,A^{'}$.
\end{lemma}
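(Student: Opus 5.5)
The plan is to use Lemma~\ref{lemd1} to describe each $P_i$ relative to $P_1$, and then use the pairwise bound $d_S(P_i,P_j)\le 1$ between two \emph{non-reference} metrics to show that all deviating metrics must swap the same pair.

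First, for each $i$, since $d_S(P_i,P_1)\le 1$, Lemma~\ref{lemd1} gives two possibilities. Either $P_i=P_1$, or $P_i$ arises from $P_1$ by swapping one adjacent pair $\{A_i,A_i'\}$. In the second case $P_i$ and $P_1$ disagree on exactly the unordered pair $\{A_i,A_i'\}$ and agree on every other pair. Let $I$ be the set of indices with $P_i\neq P_1$.

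If $I=\varnothing$, the claim holds trivially: take any pair $A,A'$ (e.g.\ two models adjacent in $P_1$, assuming $k\ge 2$). Every $P_i$ then equals $P_1$.

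If $I\neq\varnothing$, the key step is to show that all $i\in I$ swap the same pair. Suppose $i,j\in I$ with $\{A_i,A_i'\}\neq\{A_j,A_j'\}$, and compare $P_i$ with $P_j$ on these two pairs:
\begin{itemize}
\item On $\{A_i,A_i'\}$, $P_i$ reverses $P_1$, while $P_j$ agrees with $P_1$, since $P_j$ differs from $P_1$ only on $\{A_j,A_j'\}$. Hence $P_i$ and $P_j$ disagree on this pair.
\item Symmetrically, $P_i$ and $P_j$ disagree on $\{A_j,A_j'\}$.
\end{itemize}
So $d_S(P_i,P_j)\ge 2$, contradicting the $1$-distance-restriction assumed for the whole profile. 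Hence there is a single pair $\{A,A'\}$ with $\{A_i,A_i'\}=\{A,A'\}$ for all $i\in I$. Every $P_i$ is therefore either $P_1$ or $P_1$ with $A,A'$ swapped.

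The main subtlety is the step just described. It is the point where the hypothesis is used between two non-reference metrics $i,j$, rather than only against $P_1$. One must also check carefully that "obtained by swapping an adjacent pair" means disagreement on \emph{exactly} that one pair, which is the content of Lemma~\ref{lemd1}. With that in place, the rest is bookkeeping.
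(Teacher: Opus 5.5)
Your proof is correct and follows the paper's argument. You use Lemma~\ref{lemd1} to write each deviating $P_i$ as $P_1$ with one adjacent pair swapped, and then show that two metrics swapping different pairs would have $d_S(P_i,P_j)\ge 2$, contradicting the profile-wide $1$-distance restriction. You are right that this step needs the standing assumption $d_S(P_i,P_j)\le 1$ for all $i,j$, not only the lemma's stated hypothesis against $P_1$; your separate treatment of the case where no $P_i$ deviates is a harmless addition that the paper leaves implicit.
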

\begin{proof}
By $d_S(P_i,P_1)\leq1$ and Lemma~\ref{lemd1}, we must have either $P_i=P_1$ or $P_i$ is obtained from $P_1$ by swapping exactly one adjacent pair of models in $P_1$. Assume for contradiction that there exist metrics $i \neq j$ such that $P_i$ is obtained from $P_1$ by swapping one adjacent pair $A,A^{'}$, and that $P_j$ is obtained from $P_1$ by swapping a different pair of models $Q,Q^{'}$. Then, $P_i$ and $P_j$ disagree on the comparison between $A$ and $A^{'}$ and, additionally, on the comparison between $Q$ and $Q^{'}$. Thus, there are two pairs of models on which $P_i$ and $P_j$ disagree, which contradicts the assumption $d_S(P_i,P_j)\leq1$ for all $i,j$. Therefore, all orders other than $P_1$ must swap the same adjacent pair of models $A,A^{'}$.
\end{proof}
\begin{lemma}
\label{lemd3}
The pairwise majority relation $\succeq_M^D$ is transitive.
\end{lemma}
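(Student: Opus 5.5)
The plan is to use the structure given by Lemma~\ref{lemd2}. Fix $P:=P_1$. By that lemma there is a single adjacent pair $A,A^{'}$ in $P_1$ such that every $P_i$ either equals $P_1$ or is obtained from $P_1$ by swapping $A$ and $A^{'}$. If no metric differs from $P_1$, the profile is unanimous. In that case $\succeq_M^D$ coincides with $P_1$, viewed as a weak relation, and is therefore transitive. Otherwise the profile contains exactly two distinct orders, $P_1$ and $P_1^{\ast}$, where $P_1^{\ast}$ is $P_1$ with the adjacent pair $A,A^{'}$ interchanged.

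I then determine $\succeq_M^D$ pair by pair. Every pair of models other than $\{A,A^{'}\}$ is ranked identically by $P_1$ and $P_1^{\ast}$, so all $n$ metrics agree on it. The majority relation on such a pair is therefore strict and coincides with $P_1$. On the single pair $\{A,A^{'}\}$, the majority outcome depends on how many metrics use $P_1$ versus $P_1^{\ast}$. It is strict in one direction, or it is a tie $A\sim_M^D A^{'}$ when the counts are equal, which can only happen for even $n$. So $\succeq_M^D$ is either $P_1$, or $P_1^{\ast}$, or the weak order obtained from $P_1$ by merging the adjacent models $A,A^{'}$ into one indifference class.

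Each of these three relations is transitive. $P_1$ and $P_1^{\ast}$ are linear orders. The third relation is the weak order induced by the ranking map that assigns $A$ and $A^{'}$ the same level and every other model its own level. The key point is adjacency: since no model $Z$ lies strictly between $A$ and $A^{'}$ in $P_1$, every $Z$ stands in the same relation to both $A$ and $A^{'}$. This rules out any violation such as $A\sim A^{'}$, $A^{'}\succ Z$, $Z\succ A$.

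The main obstacle is making this tie case rigorous. I would verify transitivity of the merged relation directly. For any triple containing both $A$ and $A^{'}$ together with some $Z$, adjacency gives either $Z$ above both or $Z$ below both, and every combination of $\succeq$ relations is then consistent. Triples containing at most one of $A,A^{'}$ are ordered exactly as in $P_1$, so transitivity follows from $P_1$.
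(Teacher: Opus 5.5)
Your proof is correct and follows the same route as the paper's: invoke Lemma~\ref{lemd2}, note that every pair other than $\{A,A^{'}\}$ is decided unanimously as in $P_1$, and examine the three possible majority outcomes on $\{A,A^{'}\}$. Your explicit use of the adjacency of $A,A^{'}$ in $P_1$ (from Lemma~\ref{lemd1}) supplies what the paper's closing step (``at most one comparison differs from $P_1$, so no cycle exists'') tacitly relies on. Without adjacency, flipping a non-adjacent pair would create a $3$-cycle, and the tie case needs transitivity of the weak relation, not just the absence of strict cycles.
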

\begin{proof}
By Lemma~\ref{lemd2}, for all $i$, $P_i$ coincides with $P_1$ on every pairwise comparison except possibly the single pair of models $A,A^{'}$. It follows that for any pair of models $Z,Z^{'} \in \mathcal{A}$ different from $A,A^{'}$, every metric agrees on the ranking of $Z$ compared to $Z^{'}$. Thus, majority agrees on the ranking of $Z$ compared to $Z^{'}$ as well and ranks both models exactly the same way as in the order $P_1$. Therefore, the only pair on which the majority outcome might differ from $P_1$ is $A,A^{'}$. The majority can either rank $A$ above $A^{'}$ or $A^{'}$ above $A$ or tie them. In all cases, for every third model $Z$, the comparisons between $Z$ and $A$ and between $Z$ and $A^{'}$ are exactly as in $P_1$. Since at most one pairwise comparison can differ from $P_1$, a cycle involving three distinct models cannot exist. Hence, the pairwise majority relation $\succeq_M^D$ is transitive.
\end{proof}
\begin{theorem}
Fix a dataset $D\in\mathcal D$ and a finite set of algorithms $\mathcal{A}$. Assume the induced profile $R_D$ is distance-restricted to degree $1$. Then, the pairwise majority relation $\succeq_M^D$ is transitive and complete relation on $\mathcal A$. It satisfies Non-Dictatorship, Weak Pareto and IIA.
\end{theorem}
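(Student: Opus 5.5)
The plan follows the two earlier theorems in this appendix: first settle the order-theoretic properties (completeness and transitivity), then check the three Arrow axioms one at a time. \emph{Completeness} holds by definition. For any $A,A'$ the counts $M_D(A,A')$ and $M_D(A',A)$ are integers, so either $M_D(A,A')\ge M_D(A',A)$ or the reverse inequality holds. \emph{Transitivity} is exactly Lemma~\ref{lemd3}, which rests on Lemmas~\ref{lemd1} and~\ref{lemd2}. By those lemmas, every $P_i$ coincides with $P_1$ except possibly on one adjacent pair $A,A'$. Hence the majority relation equals $P_1$ with that single adjacent pair either kept, flipped, or tied. Flipping or tying an adjacent pair of a linear order still yields a complete preorder, since no third model lies between $A$ and $A'$ in $P_1$. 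I would add this sentence to make the transitivity step airtight, including the tie case.

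\emph{IIA} holds by construction. Whether $A\succeq_M^D A'$ depends only on the two counts $M_D(A,A')$ and $M_D(A',A)$, and these are determined entirely by each metric's comparison of $A$ and $A'$. \emph{Weak Pareto} is direct. If every $P_i$ ranks $A$ strictly above $A'$, then $M_D(A,A')=n$ and $M_D(A',A)=0$. Because $n\ge 1$, this gives $A\succ_M^D A'$.

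For \emph{Non-Dictatorship} I would follow the pattern of the previous proofs and exhibit, for each fixed metric $i$, a $1$-distance-restricted profile in which $i$ is overruled. This relies on the standing assumption $n\ge 3$ and on having at least two models. Choose a linear order $Q$ on $\mathcal{A}$ and an adjacent pair $A\succ_Q A'$. Let $P_i=Q$, and let every $P_j$ with $j\neq i$ be $Q$ with $A$ and $A'$ swapped. Any two orders in this profile differ by at most the one pair $\{A,A'\}$, so the profile is distance-restricted to degree $1$. Then $M_D(A',A)=n-1>1=M_D(A,A')$ because $n\ge 3$. So $A'\succ_M^D A$ even though $A\succ_{P_i} A'$, and metric $i$ is not a dictator. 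Since $i$ was arbitrary, no dictator exists.

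The main obstacle is interpretive. The domain here is a single induced profile, while Non-Dictatorship quantifies over all profiles in the domain. So, as in the earlier theorems, I would read the statement as ranging over the class of $1$-distance-restricted profiles. The only delicate point is making sure the witness profile lies in that class, and the construction above checks this. The remaining steps are routine.
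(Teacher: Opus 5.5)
Your proposal is correct and follows essentially the same route as the paper's proof: completeness by definition, transitivity from Lemma~\ref{lemd3}, IIA and Weak Pareto by construction, and Non-Dictatorship via a $1$-distance-restricted witness profile in which metric $i$ alone ranks one adjacent pair opposite to all other metrics. Your additions make explicit what the paper leaves implicit: that the tie case on the adjacent pair still gives a complete preorder, that $n\ge 3$ yields $n-1>1$, and that the witness profile lies in the class of $1$-distance-restricted profiles over which Non-Dictatorship is quantified.
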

\begin{proof}
The relation $\succeq_M^D$ is complete by definition. By Lemma~\ref{lemd3}, $\succeq_M^D$ is transitive. The relation $\succeq_M^D$ fulfills \textit{IIA} by construction. \textit{Weak Pareto} holds because if all metrics strictly prefer model $A$ to model $A^{'}$, then the pairwise majority count for $A$ over $A^{'}$ is exactly $n$ and $A\succ_M^DA^{'}$. For \textit{Non-Dictatorship} consider the following. Fix a metric $i$. Consider the fixed order $P_1$. Let $A,A^{'}$ be the pair of models from Lemma~\ref{lemd2} such that the only possible deviation from $P_1$ is swapping $A$ and $A^{'}$. Consider a preference profile where metric $i$ ranks $A$ over $A^{'}$ and every other metric $j\neq i$ ranks $A^{'}$ over $A$. Since the orders differ on at most one swapped pair, the profile is distance-restricted to degree $1$. The pairwise majority over all metrics does not rank $A$ over $A^{'}$: metric $i$ ranks $A$ over $A^{'}$ but the collective preference over all metrics is $A^{'}$ over $A$. Hence, metric $i$ is not a dictator. Since $i$ was arbitrarily chosen, there is no dictator.
\end{proof}

\subsection{Experiments on Restricted Preference Domains: HELM MMLU}\label{app:exphelm}

\paragraph{Model sets.}

The models included in the sets $\mathcal{A}_1$, $\mathcal{A}_2$, $\mathcal{A}_3$, $\mathcal{A}_4$ from the Section~\ref{experiments} are as follows.

Model set $\mathcal{A}_1 = \{$GPT-4-0613,GPT-4-1106-Preview, Claude-3-Opus-20240229, Claude-3-Sonnet-20240229, Qwen1.5-72b, Llama-2-70b, Claude-3-Haiku-20240307, Yi-34b, Yi-6b$\}$.

Model set $\mathcal{A}_2 = \{$GPT-4-0613,GPT-4-1106-Preview, Claude-3-Opus-20240229, Claude-3-Sonnet-20240229, Qwen1.5-72b, Llama-2-70b, Mistral-7b-v0.1$\}$.

Model set $\mathcal{A}_3 = \{$GPT-4-0613, Qwen1.5-72b, Llama-2-70b, Mistral-7b-v0.1$\}$.

Model set $\mathcal{A}_4 = \{$GPT-4-0613,GPT-4-1106-Preview, Claude-3-Opus-20240229, Claude-3-Sonnet-20240229, Qwen1.5-72b$\}$.

\paragraph{Group-separable preferences.} For $(\mathcal{A}_2, \Phi_{acc})$ and $(\mathcal{A}_3, \Phi_{mix})$, we find group separability fulfilled across all 57 MMLU subjects. In the third test, for  $(\mathcal{A}_1, \Phi_{mix})$, we observe group separability only in six out of 57 datasets.

\paragraph{Distance-restricted preferences.} For $(\mathcal{A}_4, \Phi_{acc})$ and $(\mathcal{A}_3, \Phi_{acc})$ , we find that all 57 MMLU subjects are distance-restricted to degree 1.  On the contrary, for $(\mathcal{A}_2, \Phi_{mix})$, none of the 57 subjects is distance-restricted to degree 1. The addition of the efficiency metric causes rankings to disagree on several pairwise comparisons of models, resulting in larger swap distance.

The results of all tests on single-peakedness, group separability and distance-restrictedness remain (nearly\footnote{The only test whose outcome changes after switching to another tie-breaking rule is the group separability experiment for $(\mathcal{A}_1, \Phi_{mix})$: Under alphabetical tie-breaking, six out of 57 datasets are group-separable; under the reverse alphabetical tie-breaking, five out of 57 datasets are group-separable.}) the same after switching to the reverse alphabetical tie-breaking rule.

\paragraph{Aggregation across datasets.}
For the experiments on group separability and distance restrictedness, Figure~\ref{commonality_sharing_rest} shows for all settings, for which the suite constitutes a Condorcet domain, the commonality sharing aggregate of all rankings in the suite. The commonality sharing ranking can be understood as a representative ranking in the whole benchmark suite. In statistical terms, it is some kind of a generalization of the concept of a median to the case of data sets, where each data point is a ranking. The computation of the deepest relation is done by calculating the Tukey depth of all rankings. Concretely, we use the R package ddandrda \cite{ddandrda}. 

\includegraphics[width=0.4\textwidth]{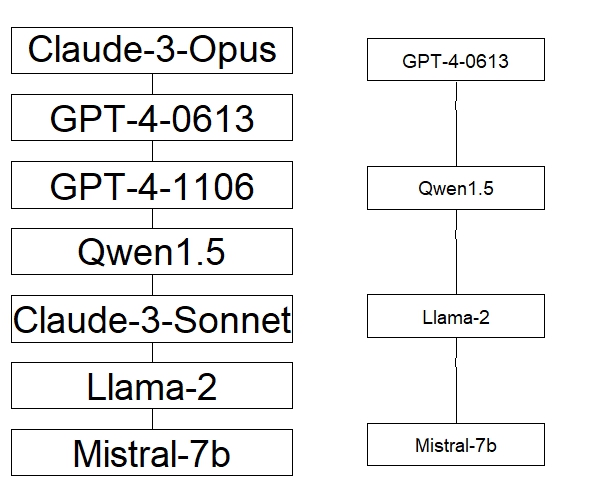}
\hfill \includegraphics[width=0.4\textwidth]{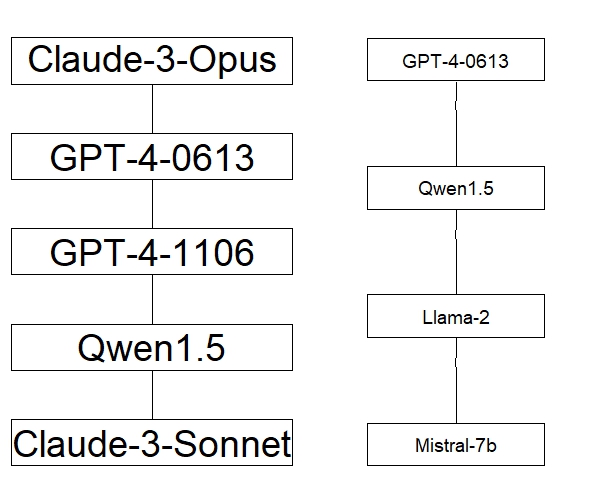}
\captionof{figure}{
Aggregated ranking across all $57$ datasets of HELM MMLU according to the commonality sharing rule (cf., Section~\ref{section_aggregation_across_datasets}) for the experiments for group separability (left, $(\mathcal{A}_2, \Phi_{acc})$ and $(\mathcal{A}_3, \Phi_{mix})$) and for distance-restrictedness (right, $(\mathcal{A}_4, \Phi_{acc})$ and $(\mathcal{A}_3, \Phi_{acc})$).}\label{commonality_sharing_rest}

\subsection{Experiments on Restricted Preference Domains: PMLB and OpenML}\label{app:exp_other}

For our verifications of restricted preference domains on PMLB and OpenML benchmarks we use the final results from \citet{gsd} and then proceed exactly as in the tests on HELM MMLU. 

\paragraph{PMLB.}
The benchmark suite includes six classifiers (cre, J48, glmnet, knn, ranger, svmRadial) evaluated on three metrics (predictive accuracy, robustness to feature perturbation and robustness to label perturbation) over 63 datasets. We find that seven out of 63 datasets are single-peaked. For group separability, we observe that 11 out of 63 datasets are group-separable. The results remain very similar after switching to the reverse alphabetical tie-breaking. Since the metrics measure related aspects of one latent concept (robust accuracy), it is plausible that we observe some single-peaked and group-separable datasets; however, the metrics are still different enough so that there is non-trivial disagrement between rankings which leads to multiple violations of our structural assumptions.

\paragraph{OpenML.}
The suite contains seven classifiers (classif.glmnet, classif.kknn, classif.multinom, classif.ranger, classif.rpart, classif.svm, classif.xgboost) evaluated on 80 datasets and three metrics (predictive accuracy, computation time on training data and computation time on test data). We find that none of 80 datasets is single-peaked, and four out of 80 datasets are group-separable. The results are unchanged under the reverse alphabetical tie-breaking. The combination of accuracy with training and test runtime induces a strong trade-off, meaning that different metrics disagree on many pairwise comparisons, which implies less structured preference profiles.

\subsection{Experiments: Coherence and Stability}\label{app:expmotiv}

\paragraph{Condorcet Cycles.} We preprocess the HELM MMLU data exactly as in Section~\ref{setupexp}. We align metrics so that larger values mean better performance by flipping the sign for ``lower-is-better'' metrics such as \textit{Inference time} and \textit{Output length}. Then, for each subject, we search over all triples of metrics in the fixed set $\{$\textit{exact\_match}, \textit{inference\_runtime}, \textit{num\_bytes}, \textit{logprob}, \textit{perplexity}, \textit{num\_output\_tokens}$\}$. Within a metric triple, we restrict to models with complete observations and define pairwise majority comparisons. For each pair of models, each metric ``votes'' only when the two aligned scores differ by more than a small tolerance; ties and near-tie situations are treated as abstentions. Then, we search for cycles i.e., triples of models $A,A^{'},A^{''}$ such that $A$ is preferred to $A^{'}$, $A^{'}$ is preferred to $A^{''}$, and $A^{''}$ is preferred to $A$. A subject is counted as cyclic if at least one metric triple yields such a cycle. For each subject, we keep the most robust witnessed cycle and assign it a buffer. For each of the three pairwise majority wins in the cycle, we take the smallest score gap among the metrics supporting that win so that the cycle’s buffer is the minimum of these three values. Larger buffers therefore correspond to cycles that persist under small perturbations of the metric values. Under this procedure, we find at least one Condorcet cycle in 51 of 57 MMLU subjects. This suggests that cyclic preferences are widespread once accuracy and efficiency metrics are combined.

The full model names for the cycle presented in Table~\ref{table:cycle} are GPT-3.5-Turbo-0613, GPT-4-1106-Preview and Qwen1.5-14b. Note that in the experiments in Section~\ref{setupexp} (also covered in Appendix \ref{app:exphelm}) the model name Qwen1.5 is used as the abbreviation for the model Qwen1.5-72b.

\paragraph{Instability to changes in the model set.}

For each MMLU subject, we fix the metric set $\{$\textit{exact\_match}, \textit{inference\_runtime}, \textit{num\_bytes}$\}$ (flipping the sign for ``lower-is-better'' metrics as above). For each subject, we restrict to models with complete observations and focus only on the top 15 models by the metric \textit{exact\_match}. For each subject, we compute the overall ranking by averaging the metrics’ ranks, then add one additional model and recompute the ranking. We record whether any pairwise order among the original 15 models flips under this change. We find such flips in 44 out of 57 MMLU subjects.

The full model names for all models mentioned in Table~\ref{table:instability} are as follows: GPT-4-0613, GPT-4-1106-Preview, Claude-3-Opus-20240229, Qwen1.5-72b, GPT-3.5-Turbo-0613, Google-Text-Bison@001 and Llama-2-13b. Note that in the experiments in Section~\ref{setupexp} (also covered in Appendix \ref{app:exphelm}) the model name Llama-2 is used as the abbreviation for the model Llama-2-70b.


\end{document}